\newtheorem{thm}{Theorem}[section]
\newtheorem{theorem}[thm]{Theorem}
\newtheorem{proposition}[thm]{Proposition}
\theoremstyle{definition}
\newtheorem{definition}[thm]{Definition}
\newtheorem{example}[thm]{Example}
\newtheorem{remark}[thm]{Remark}
\newtheorem{observation}[thm]{Observation}
\newcommand{\cN}{\mathcal{N}}
\newcommand{\R}{\mathbf{R}}
\newcommand{\N}{\mathbf{N}}
\newcommand{\bE}{\mathbf{E}}
\newcommand{\M}{\mathrm{M}}
\newcommand{\beq}{\begin{equation}}
\newcommand{\eeq}{\end{equation}}
\newcommand{\lra}{\longrightarrow}
\newcommand{\rank}{\mathrm{rank}}
\newcommand{\Span}{\mathrm{span}}
\newcommand{\RELU}{\mathrm{RELU}}
\newcommand{\KL}{\mathrm{KL}}
\begin{document}

\centerline{\Large
\bf Deep Learning and Geometric Deep Learning:}

\medskip 
\centerline{\Large
\bf an introduction for mathematicians and physicists} 

\medskip
\centerline{R. Fioresi, F. Zanchetta\footnote{This research was supported by Gnsaga-Indam, by
COST Action CaLISTA CA21109 and by
HORIZON-MSCA-2022-SE-01-01 CaLIGOLA.}}

\centerline{FaBiT, via San Donato 15, 41127 Bologna, Italy}

\centerline{rita.fioresi@unibo.it, ferdinando.zanchett2@unibo.it}

\begin{abstract}
    In this expository paper we want to give a brief introduction, with few key references for further reading, to the inner functioning of the new and successfull algorithms of Deep Learning and Geometric Deep Learning with a focus on Graph Neural Networks. We go over the key ingredients for  these algorithms: the score and loss function and we explain the main steps for the training of a model. We do not aim to give a complete and exhaustive treatment, but we isolate few concepts to give a fast introduction
    to the subject. We provide some appendices %with a more mathematical
    to complement our treatment discussing Kullback-Leibler divergence, regression, Multi-layer Perceptrons and the Universal Approximation Theorem. %from a mathematical point of view.
\end{abstract}

\tableofcontents
\section{Introduction}\label{intro-sec}
The recent revolution in machine learning, including
the spectacular success of the
Deep Learning (DL) algorithms
\cite{bronstein2016, bronstein2021, lecun2015}, 
challenges mathematicians and physicists
to provide 
models that explain the elusive mechanisms
inside them. Indeed, the popularity of
Deep Learning, a class of machine learning algorithms that aims to solve problems by extracting high-level features from some raw input, is rising as it is being employed successfully to solve difficult practical problems in many different fields such as speech recognition \cite{SRNH}, natural language processing \cite{NLPGOOGLE, hinton2012deep}, image recognition \cite{imagenet}, drug discovery \cite{AtomNet}, bioinformatics \cite{DeepBind} and medical image analysis \cite{DLMedIMG} just to cite a few, but the list %can be made 
is much longer. Recently at Cern the power of deep learning was employed for the analysis of LHC 
(Large Hadron Collider) data \cite{Guest2018DeepLA}; particle physics is more and more being investigated by
these new and powerful methods (see the review \cite{Feickert2021ALR} and refs. therein).
To give a concrete examples of more practical applications, Convolutional
Neural Networks (CNNs, \cite{LeCunzip}, \cite{lecun2015}), particular DL algorithms, were employed in the ImageNet challenge
\cite{kr-imagenet, imagenet},
a supervised classification task challenge, where participants proposed
their algorithms to classify a vast dataset of
images belonging to 1000 categories. The CNNs algorithms proposed by
several researchers including LeCun, Hinton et al. \cite{lecun2015, 
kr-imagenet} surpassed the
human performance and contributed to establish a new paradigm in machine
learning.

\medskip
The purpose of this paper is to elucidate some of the main mechanisms
of the algorithms of Deep Learning and of %one of its most important parts, 
Geometric Deep Learning \cite{bronstein2016,bronstein2021} (GDL), an adaptation of Deep Learning for data organized
in a graph structure
\cite{kw}. To this end, we shall focus on two very important types of algorithms: CNNs and Graph Neural Networks (GNNs, \cite{GRLHamilton}) for  supervised classification tasks.
Convolutional neural networks, DL algorithms already part of the family of GDL algorithms, are extremely successful
for problems involving data with a {\sl grid structure}, in other
words a regularly organized structure, where the
mathematical operation of convolution,
in its discrete version, makes sense. On the other hand, when we
deal with datasets having an underlying geometric graph structures, it is necessary to adapt the notion of convolution,
since every node has, in principle, a different neighbourhood and local topology. This motivates the introduction of Graph Neural Networks.

We do not plan to give a complete treatment of all the types of CNNs and GNNs, but
we wish to give a rigorous introduction for the mathematicians and
physicists that wish to know more about both the implementation and the theory behind these algorithms.

\medskip
The organization of our paper is as follows.

In Sec. \ref{sc-sec}, we describe CNNs for supervised classification,
with focus on the image classification task, since historically this
is one of the main applications that helped to establish the Deep Learning popularity among the
classification algorithms. In this section we describe
the various steps in the creation of a {\sl model}, that is
the process, called {\sl training} in which we determine the
parameters of a network, which perform best for the given classification task.
This part is common to many algorithms, besides Deep Learning ones.

In Sec. \ref{dl-sec}, we focus on the {\sl score} and the {\sl loss}
functions, 
describing some examples of such functions, peculiar to Deep Learning, that are used
routinely. % in the algorithm. 
Given a datum, for example an image, the score function assigns
to it, the score corresponding to each class. Hence it will allow us
to classify the datum, by assigning to it the class with
the highest score. The loss of a datum measures the error committed during
the classification: the higher the loss, the further we are from
the correct class of the datum. %We may say that 
Some authors say that the purpose of
the training is to ``minimize'' the loss, and we shall give a precise meaning to this intuitive statement.  

In Sec. \ref{gdl-sec}, we introduce some graph neural networks for supervised node classification tasks.
After a brief introduction to graphs and their laplacians, we start our description of the score function in this context. In fact, it is necessary to modify the ``convolutional
layers'', that is the convolutional functions appearing in the expression
of the score function, to adapt them to the data in graph form, which,
in general, is not grid-like. This leads to the so called ``message passing''
mechanism, the heart of the graph neural networks algorithms
and explained in Subsec. \ref{heat-sec} describing the
convolution on graphs and its intriguing relation with
the heat equation.  

In the end we provide few appendices to deepen the mathematical
treatment that we have not included in the text not to disrupt the reading.

\section{Supervised Classification}\label{sc-sec}

Deep Learning is a very successful family of machine learning algorithms. We can use it to solve both supervised and unsupervised problems for both regression or classification tasks. In this paper, we shall focus on supervised classification tasks: as supervised regression tasks are handled similarly, we will mention them briefly in Appendix \ref{regr-app}.
To solve a classification problem in the supervised learning setting, we have data points belonging to the euclidean
space $\R^d$ and each point has a label, tipically an integer number
between $0$ and $C-1$, where $C$ is the number
of {\it classes}. The goal is to ``learn'', that is to determine, a function that associates
data points with the correct label $F:\R^d \lra \{0,\dots, C-1\}$,
through the process of \textit{training} that we discuss below.

\medskip
We shall focus our attention on a classification task the algorithm of
Deep Learning is particularly effective with: 
{\sl supervised image classification}. 

\subsection{Supervised image classification  datasets}\label{scd-sec}

In this supervised task,
we have a database of images, each coming with
a {\it ground truth} label, that is a label representing %correctly
the class of the given image. Hence, for supervised classification,
it is necessary to have a {\sl labelled
dataset}; producing such databases is time consuming in concrete applications, where
labelling may take a lot of human effort. However, %in these notes
%we will consider, as examples,
there are two very important examples of
such datasets representing
a key reference for all researchers in this field:
the \textit{MNIST} and {\it CIFAR} datasets \cite{kr-cifar, lecun2010mnist}. 
Datasets of this kind are called ``benchmark'' datasets, they are free and publicly
available and they can be easily downloaded from several
websites, including
the colab platform\footnote{colab.research.google.com}. for example.
The images in these datasets are
already labelled, hence they are ready for supervised image classification.
They are perfect to start ``hands on'' experiments\footnote{See the tutorials 
publicly available on the colab platform.}.

\medskip
Let us briefly describe these datasets, since they represent the
first and most accessible way to learn how to construct
and run successfully a deep
learning algorithm.

\medskip
The MNIST dataset
contains $70000$ black and white images of handwritten digits 
from 0 to 9 (see Fig. \ref{fig:mnist}). 
\vskip 0.05in
\begin{figure}[ht]
\begin{center}
\centerline{\includegraphics[width=3in]{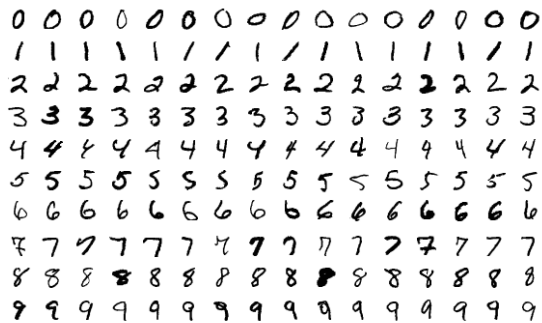}}
\vskip 0.05in
\caption{Examples from MNIST dataset}
\label{fig:mnist}
\end{center}
\vskip -0.2in
\end{figure}
Each image consists of
$28 \times 28$ pixels, hence it is represented by a matrix
of size $28 \times 28$, with integral entries between $0$ and
$255$ representing the grayscale of the pixel. Effectively,
an image is a point in $\R^d$, for $d=28 \times 28=784$.
\vskip 0.05in
\begin{figure}[ht]
\begin{center}
\centerline{\includegraphics[width=5.5in]{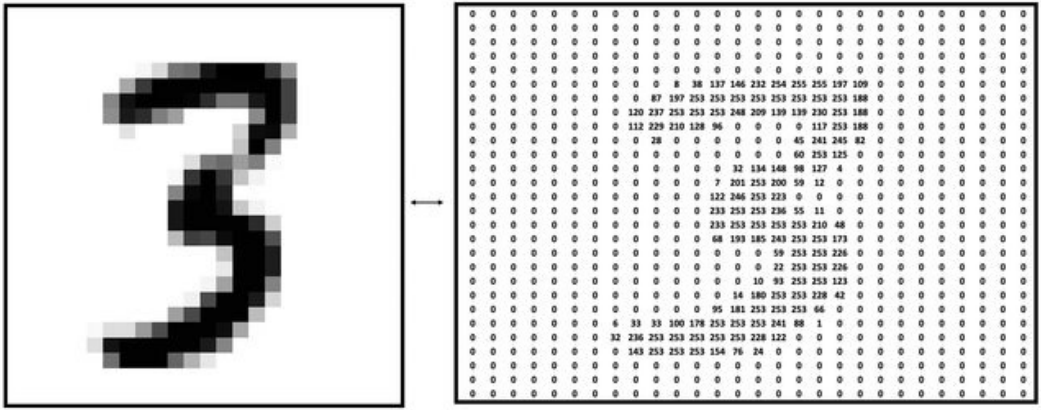}}
\vskip 0.05in
\caption{Matrix values for a sample of number 3 in MNIST, \cite{boroojerdi}}
\label{fig:mnist1}
\end{center}
\vskip -0.2in
\end{figure}
The MNIST dataset
comes already
divided into a training set of 60000 examples and a test set of 10000
examples. We shall see the meaning of such partition in the sequel.

\medskip
Another important benchmark dataset is the CIFAR10 database, containing 60000
natural color images divided into 10 classes, ranging from airplanes, cars etc.
to cat, dogs, horses etc.
Each image consists of $32\times 32$ pixels; each pixel has three numbers
associated with it, between 0 and 255 corresponding
to one of the RGB channels\footnote{RGB means
Red, Green and Blue the combination
of shades of these three colors gives a color image.}.
So, practically, every image is an
element in $\R^d$, with $d=32 \times 32 \times 3=3072$.
\vskip 0.05in
\begin{figure}[ht]
\begin{center}
\centerline{\includegraphics[width=4.5in]{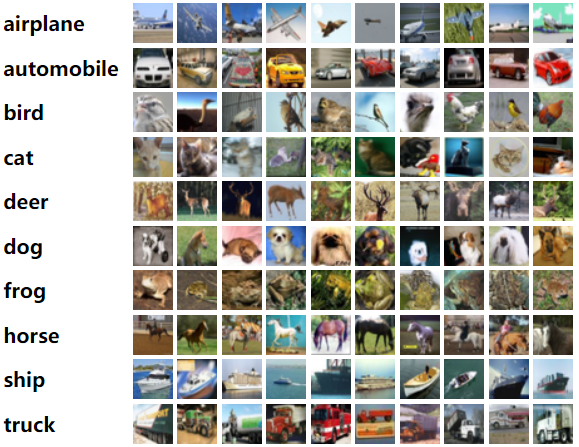}}
\vskip 0.05in
\caption{Examples from CIFAR10 dataset (www.cs.toronto.edu)}
\label{fig:mnist}
\end{center}
\vskip -0.2in
\end{figure}

In this dataset each datum has a larger dimension than in MNIST,
where $d=28 \times 28$. This larger dimension, coupled with a greater
variability among samples, makes 
the training longer and more expensive; we shall see more on this later on.

\medskip
In supervised image classification tasks, 
the dataset is typically divided into three disjoint subsets\footnote{
  The ``cross validation'' technique uses a part of
the training set as validation set, but, for the sake of
clarity, we shall not describe this common
practice here.}: {\sl training}, {\sl validation} and {\sl test} sets.

\medskip
1. {\it Training set}: it is used to determine
  the function that associates a label to a given image. Such function
  depends on parameters $w \in \R^p$, called \textit{weights}.
  %  which are initialized randomly.
  The process during
  which we determine the weights, by successive
  iterations, is called \textit{training}; the weights
  are initialized randomly (for some standard random initialization algorithms, see \cite{Xavier})
  at the beginning of the training.
  We refer to a set of weights, that we have determined through
  a training,
  as a \textit{model}. Notice that, since the weights are initialized
  randomly, and because of the nature of the training process we will describe later, we get two different models even if we perform two trainings in
  the very same way.

\medskip
2. 
{\it Validation set}: it is used to evaluate the performance of
  a model, that we have determined through a training.
  If the performance is not satisfactory, we change the form of the
  function that we have used for the training and we repeat the training thus
  determining a new model. The shape of the function depends on
  parameters, which are called \textit{hyperparameters} to distinguish
  them from the weights, which are determined during the training.
  The iterative process in which we determine the hyperparameters is called
  \textit{validation}. After every iteration we need to go back and
  obtain a new model via training.

\medskip
3. {\it Test sets}: it is used {\sl once} at the end of both
  training and validation to determine 
the accuracy of the algorithm, measured in percentage of accurate
predictions on total predictions regarding the labels of the test images.
It is extremely important to keep this dataset disjoint from training
and validation; even more important, to use it only once.

A typical partition of a labelled dataset is the following:
80\% training, 10\% validation, 10\% test. 

We now describe more in detail the key process in any supervised
classification task: the training.

\subsection{Training}\label{training-sec}

The expression training refers to the process of optimization
of the weights $w$ to produce 
a {\sl model} (see previous section).
In order to do this, we need three key ingredients: the {\sl score
  function}, the {\sl loss function} and the {\sl optimizer}.

\medskip
1. The \textit{score function} $s$ is assigning to each datum $x \in \R^d$, for example an image, and
to a given set of weights in $\R^p$ a \textit{score} for each class:
$$
\begin{array}{ccc}
s:\R^d \times \R^p &\lra & \R^C, \\ \\
(x,w) & \mapsto & s(x,w)=(s_0(x,w),\dots,s_{C-1}(x,w))
\end{array}
$$
where $C$ is the number of classes in our classification task
\footnote{Notice: we write $C-1$, since in python all arrays start from the
  zero index, so we adhere to the
  convention we see in the actual programming code.}
.
We can interpret the function $F: \R^d \lra \{0,\dots, C-1\}$, mentioned
earlier, associating to an image its class, in terms
of the score function. % as follows.
For a fixed model $w$,
$F$ is the function 
assigning, to an image $x \in \R^d$, the
  class corresponding to its highest score, i.e.
  $$
  F(x)= i, \qquad s_{i}(x,w) = \hbox{max} \{s_j(x,w), j=0, \dots, C-1\}
  $$
  The purpose of the training and validation is to obtain the best $F$
  to perform the classification task, which is, by no means, unique.

  \medskip

  To get an idea on the values of $p$, $d$, $C$
  we look at the example of the CIFAR10 database, where $C=10$.
  Then  if $x$ is an image of CIFAR10, $x \in \R^d$, for 
  $d=32\times 32 \times 3=3072$, while %usually $w \in \R^p$, where
  $p$,  the dimension of the weight space, is typically in
the order of $10^6$ and depends on the choice of the score function.
We report in Table \ref{tab1} the dimension of the weight space $p$
for different architectures, i.e. choices of the score function,
on CIFAR10 (M$=10^6$).

\begin{table}[!h]
\centering
\caption{Values for $p$ for
various architectures on CIFAR10.}% (see \cite{hlmw}).}
\label{tab1}
\begin{tabular}{|l|l|}
\hline
Architecture &  $p=|$Weights$|$ \\
%& $N=|$Data$|$, CIFAR & $N=|$Data$|$, SVHN \\
%$N$ ImageNet\\
\hline
ResNet &  1.7M \\ %& 60K & 600K \\
Wide ResNet &  11M \\ %& 60K & 600K\\
DenseNet (k=12)  & 1M \\ %&  60K & 600K \\
DenseNet (k=24) & 27.2M  \\ %& 60K & 600K \\ %& 1.2M\\
\hline
\end{tabular}
\end{table}

We are going to see in our next section, focused on
Deep Learning, a typical form of
a score function in Deep Learning.

\medskip
%\item 
2. The \textit{loss function} measures how accurate is the prediction
that we obtain via the score function. In fact, 
given a score function, %that is a vector with $C$ entries, 
we can immediately transform it into a probability distribution, by
assigning to an image $x$ the 
probability that $x$ belongs to one of the classes: 
%%We use the softmax function:
\beq\label{prob-eq}
p_i(x,w):= \frac{e^{s_i(x,w)}}{\sum_j e^{s_j(x,w)}}
\eeq

%Ideally,
If the probability 
in (\ref{prob-eq}) is a mass probability distribution concentrated
in the class corresponding to the correct label of $x$, we want the
loss function $L(x,w)$ for $x$ to be zero. We are going to see
in our next section a concrete example of such function.
Since the loss function is computed
via the score, it depends on both the weights $w$ and the data $x$. 
The total loss function (or loss function for short) is the sum of all the
loss functions for each datum, averaged on the number of data $N$:
$$
L: \R^p \lra \R, \qquad L(w)=\frac{1}{N} \sum_x L(x,w)
$$
For example in the MNIST dataset $N=60000$, hence the loss function
is the sum of $60000$ terms, one for each image in the training set.

\medskip
%\item 
3. The \textit{optimizer} is the key to determine %in the choice of the
the set of weights $w$, which perform best on the training set.
As we shall presently see, at each step the weights will be
updated according to the optimizer prescription aimed to minimize (only
locally though) the loss function, that is the error committed in the prediction
of the labels on the samples in the training set.
%The common technique is the \textit{stochastic gradient descent}.
In a problem of optimization, a standard technique to
minimize the loss function $L$ is the gradient descent (GD) 
with respect to the weights $w$.
In GD the update step in the space of parameters at time\footnote{The
  time is a discrete variable in machine learning, we increase it by
one at each iteration.} $t$
has the form:
\begin{equation}\label{gd}
    w_{t+1} = w_t - \eta\nabla L(w_t)\
\end{equation}
%\[ {w} \rightarrow  {w} - \eta\nabla L( {w})\]
%The size of the step 
where $\eta$ is the \textit{learning rate}
and it is an hyperparameter  of the model, whose optimal value is chosen during
validation. The gradient is usually computed with an highly efficient algorithm called \emph{backpropagation} 
(see \cite{lecun2015}), where the chain rule, usually taught in calculus classes, plays a key role. We are unable
to describe it here, but for our synthetic exposition of key concepts, it is not essential to understand the
functioning of the algorithm.%\comment{R. aggiunto (Paolo)}

The gradient descent minimization technique is guaranteed to reach a
suitable minimum of the loss function only if such function is convex.
Since the typical loss functions used in Deep Learning are non-convex,
it is more fruitful to use a variation of GD, namely
\textit{stochastic gradient descent} (SGD). This technique will add ``noise"
to the training dynamics, this fact is crucial for the correct performance,
and can effectively thermodynamically modeled
see \cite{cs}, %\cite{ffss} 
for more details, we will not go into such description. %\comment{R. added}
SGD is one of the most
popular optimizers, together with variations on the same theme (like
Adam, mixing SGD with Nesterov momentum, see \cite{Adam}).
In SGD, we do not compute the gradient of the loss function
summing over all training data, but only on a randomly
chosen subset of training samples, called \textit{minibatch}, $\mathcal{B}$. 
The minibatch changes at each step, hence providing iteration after iteration
a statistically significant sample of all the training dataset. %\comment{R. aggiunto per chiarezza (Paolo)} 
The update of the weights is obtained as in (\ref{gd}),
where we replace the gradient of the loss function  %\comment{R. cambiata formula per la notazione}
$$
\nabla L(w) = 
{\frac{1}{N}} \sum_x %_{i=1}^N 
\nabla L(x,w)  \quad \hbox{with} \quad 
\nabla_{\mathcal{B}} L(w) := \frac{1}{|\mathcal{B}|} \sum_{x \in \mathcal{B}}\nabla L(x,w)
$$ 
where $N$ is the size of the dataset (i.e. the number
of samples in the dataset), while $|\mathcal{B}|$ is the size
of the minibatch, another hyperparameter of the model. 
The SGD %algorithm performing the 
update step is then:
\begin{equation}\label{sgd}
      w_{t+1} = w_t - \eta\nabla_\mathcal{B} L({w_t})\
\end{equation}
%is called \textit{stochastic gradient descent} (SGD).

In SGD the samples are extracted randomly and the same sample
can be extracted more than once in subsequent
steps for different minibatches.%\comment{R. aggiunto}  
This technique allows to explore the
{\sl landscape} of the loss function with a noisy movement around
the direction of steepest descent and also prevents the
training dynamics from being trapped in an inconvenient local
minimum of the loss function.

We define as \textit{epoch} the number of training steps
necessary to cover a number of samples equal to the size of
the entire training set:
$$
\mathrm{epoch}=\frac{N}{|\mathcal{B}|}
$$

For example in CIFAR10, where $N=60000$, if we choose a minibatch
size of 32, we have that an epoch is equal to $60000/32 =1875 $ iterations
i.e. updates of the weight as in (\ref{sgd}).
%We measure the number of iteration in
%terms of epochs.
A typical training can last from 200-300 epochs
for a simple database like MNIST, to few thousands for a more
complicated one. Often the learning rate is modified during the
iterations, decreasing it as the training (also called learning
process) goes.
Usually, values of the loss and accuracies are printed out
every 10-100 epochs to check the loss values are actually decreasing
and the accuracy is improving. As usual, by accuracy we mean the number
of correct predictions with respect to the total number of data in training set.
Since in our examples, MNIST and CIFAR10
%where, by simplicity, we assume 
the labels are evenly distributed
among the classes, this notion of accuracy is meaningful.

\medskip
For a supervised classification task, let us summarize the training
process. We have the following steps:
\begin{itemize}
\item Step 0:
  Initialize randomly the weights.
\item Step 1:
  Assign a score to all data.

  \item Step 2:
  Compute the total loss based on score at Step 1.
\item Step 3: Update the weights according to the chosen optimizer
  (e.g. SGD) for a random minibatch.
\item Step 4: Repeat Steps 1, 2, 3 until the loss reaches
  a plateau and is not decreasing anymore (usually few hundreds epochs).
\end{itemize}

At the end of the training,
validation is necessary to understand whether a change
in our choices, like
the score function, the size of the minibatch and the learning rate,
can produce a better prediction, by computing the accuracy on
the validation set, consisting of images {\sl not} belonging to the
training set, that we have used to compute the optimal weight. In practice this means changing the hyperparameters of the model, repeating the training on the training set and then evaluating the trained model on the validation set to see if we get a better performance. This process is then repeated many times trying multiple hyperparameters combinations. There are techniques to conduct this \emph{hyperparameter optimization} in an efficient way, for example, the interested reader might read about one of these techniques here \cite{Optuna}.

\section{Deep Learning}\label{dl-sec}

In this section we shall focus on score and loss functions widely used when solving
supervised image classification tasks with Deep Learning algorithms. We will then focus on this specific task.
Again, this is not meant to be an exhaustive treatment, but just an overview
to help mathematicians and physicists to understand the mathematical and geometric
concepts underlying these algorithms.
%this is only one of the many successfull applications of Deep Learning.

\subsection{Score Function}\label{score-sec}

%%\comment{R. we want score to depend from $x$ and $w$ or only on $x$?}
The purpose of the score function is to associate to every
image $x$ in our dataset, for a given set of weights $w \in \R^p$,
a score for each class:
$$
s:\R^d \times \R^p \lra \R^C
$$
where $d=b \times c$, if the image is black and white, $b$
and $c$ taking into account the dimensions of the image pixelwise.
If we have a color image, $d=3 \times b \times c$ because
of the three RBG channels (see also Sec. \ref{scd-sec}).
For clarity, we shall assume to have $d=b \times c$, the
general case being a small modification of this.

\medskip
The score function in supervised learning
characterizes the algorithm and for Deep Learning
it consists in the composition of several functions, each with
a geometrical meaning, in terms of images and perception.
We discuss some of the most common functions appearing in the
score function; this list is by no means exhaustive, but gives an
overview on how this key function is obtained. We also notice
that there is no predefined score function: for each
dataset some score functions
perform better than others. However very different score functions,
still within the scope of the Deep Learning,
may give comparable results in terms of accuracy.

\medskip
Common functions appearing in the expression of the score
function for Deep Learning algorithms are: linear and affine functions,
the RELU function, convolutional functions. We shall examine each
of them in some detail.

\medskip
%begin{itemize}
%item
\paragraph%\underline
{Affine functions.} 
Since
  any image is a matrix of integers, we can transform such matrix
  $x \in \R^{b} \times \R^{c}$ into a row vector in $\R^d$, $d=b\times c$
  by concatenating each row/column. Then, we define:
  $$
  \ell_{w,b}: \R^d \lra \R^n, \qquad \ell_{w,b}(x)=xw+b
  $$
  where $w=(w_{ij})$ is a $d \times n$ matrix of weights and
  $b \in \R^d$ a vector (also of weights) called \textit{bias}. %\underline{It is not difficult to show that we can always write the function
  %$\ell_{w,b}$ as a linear function of a modified $x$.}%\comment{F.  bit misleading}
  %by incorporating the bias into its definition. 
  For simplicity, from now on we will work with linear functions only $\ell_w$, with
  associated matrix $w$, that is we take $b=0$, see
  App. \ref{mlp-app} for more details on the general case.
  %we assume $b=0$ and we write $\ell_w(x)=xw$.

  \medskip
  If we define the score as:
  $$
  s:\R^d \times \R^{p} \lra \R^C, \qquad
  s(x,w)=\ell_w(x), \qquad p=d \times C
  $$
  then, we say we have a \textit{linear classifier} or
  equivalently we perform \textit{linear classification}. Linear classifiers do not belong
  to the family of Deep Learning algorithms, however they always appear in the set of functions
  whose composition gives the score in Deep Learning. 
  In this case, %$n=C$ the number of classes and 
  the number
  of weights is $p=d \times C$.
  For the example of MNIST and linear classification, $W$
  is a matrix of size $p=784 \times 10$, since an image of MNIST
  is a black and white $28 \times 28$ pixel image and belongs to one
  of 10 classes each representing a digit.
  Linear classification on simple datasets as MNIST
  yields already significant results (i.e. accuracies above 85\%).
  It is important, however, to
  realize that linear classification is effective only if the
  dataset is \textit{linearly separable}, that is, we can partition
  the data space $\R^d$ using affine linear hyperplanes and all the
  samples of the same class lay into one component of the partition. 
\vskip 0.05in
\begin{figure}[ht]
\begin{center}
\centerline{\includegraphics[width=4.5in]{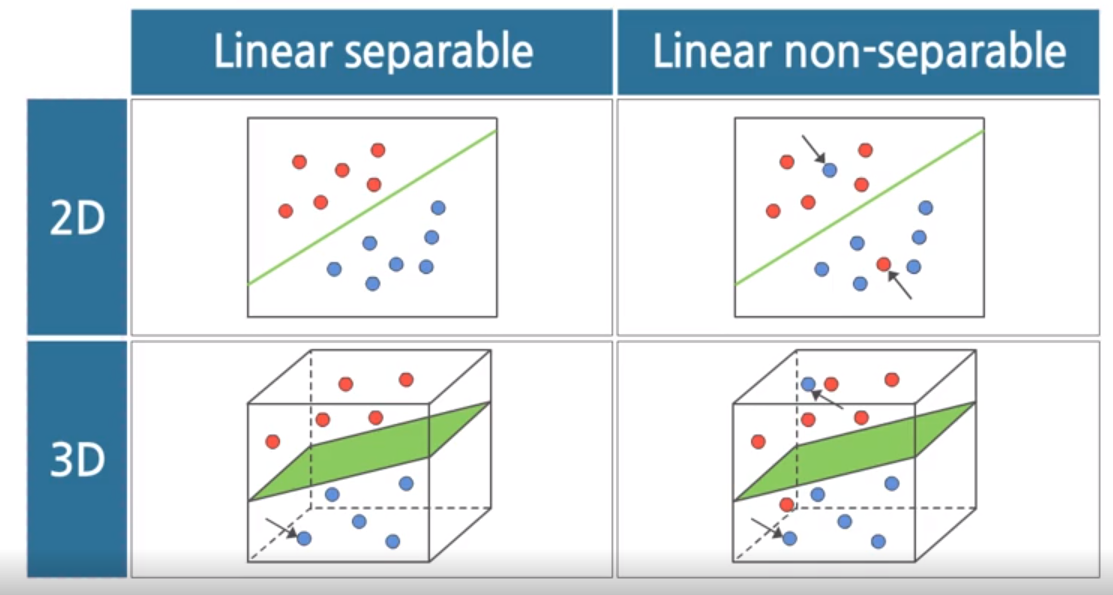}}
\vskip 0.05in
\caption{Linearly separable and linearly non-separable sets (github.org)}
\label{fig:lin}
\end{center}
\vskip -0.2in
\end{figure}
%\medskip
%\item
\paragraph{Rectified Linear Unit (RELU) function.} Let
the notation be as above. We define, in a generic euclidean
space $\R^m$, the RELU function as:
%It is defined as follows
$$
\RELU: \R^m   \lra \R^m  , \qquad
\RELU(z)=(\max(0,z_1), \dots, \max(0,z_m)) 
$$
where $z=(z_1,\dots,z_m) \in \R^m$.

Notice that if the RELU appears in the expression of the score function,
%it brings a {\sl non linearity} to the score function,
it brings {\sl non linearity}.
If we realize the score function by composing %alternating
linear and RELU functions, we can tackle more effectively the
classification task of a non linearly separable dataset (see Appendix \ref{Univ:approx}).
So we can define, for example, the non linear score function:
\beq\label{2layer}
s: \R^d \times \R^p \lra \R^C, \qquad s(x,w)=
(\ell_{w_2} \circ \RELU \circ \ell_{w_1})(x)
\eeq
This very simple score function already achieves a remarkable performance
of 98\% on the MNIST dataset, once the training and validation
are performed suitably (see Sec. \ref{training-sec}).
Notice that we have:
\beq\label{2layerbis}
\ell_{w_1}:\R^d \lra \R^h, \qquad \ell_{w_2}: \R^h \lra \R^C
\eeq
We know $d$ the data size (for example for MNIST $d=784$) and $C$
the number of classes (for example for MNIST $C=10)$),
but we do not know $h$, which is an hyperparameter
%which has
to be determined during validation (for example to achieve a 98\%
accuracy on MNIST, $h=500$).

\medskip
In the terminology of neural networks, that
we shall not fully discuss here, we say that $h$ is the
dimension of the \textit{hidden layer}.
So, we have defined in (\ref{2layer})
a neural
network consisting of the following layers:
\begin{itemize}
\item the \textit{input layer},
defined by the linear function $\ell_{w_1}$ and
%given by our input,
taking as input the image $x$;
\item the
%, the
\textit{hidden layer} defined by
$\ell_{w_2}$ taking as input the vector RELU$(\ell_{w_1}(x))$; %and finally
\item the
  \textit{output layer} providing a score for each class.
  \end{itemize}
We say that the score in (\ref{2layer})
defines a \textit{two layer network}, since we do not count (by
convention) the input layer\footnote{Some authors have RELU
  count as a layer, we do not adhere here to such convention.}.
In Fig. \ref{fig:2layer} we give schematically the layers
by nodes, each node corresponding to the coordinate of the vector
the layer takes as input; in the picture below, we have $d=3$, $h=4$, $C=2$.

\begin{figure}
    \centering
    \includegraphics[width=0.45\textwidth]{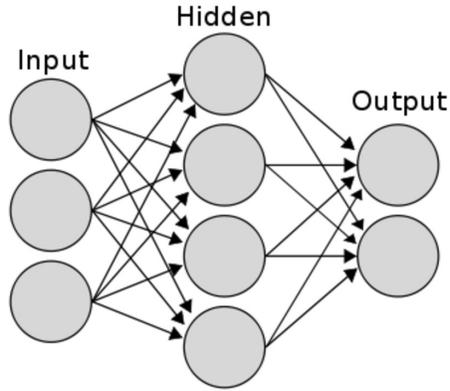}
    \caption{Representation of a two layer network}
    \label{fig:2layer}
\end{figure}

Notice that such score function is evidently not differentiable,
while we would need it to be, in order to apply the stochastic gradient descent
algorithm. Such problem is solved with some numerical tricks, we do not detail here.

\medskip
The supervised classification Deep Learning algorithm,
where the score function is expressed as the composition
of several affine layers with RELU functions between them as in (\ref{2layer}),
takes the name of \textit{multilayer perceptron} (see also Appendix \ref{mlp-app}
for a variation with convolution functions).
In Appendix \ref{mlp-app} we discuss the mathematical significance
of such score function in classification and regression tasks.

\paragraph{Convolution functions.} 
These functions characterize a particular type of Deep Learning networks, called \emph{Convolutional Neural Networks} (CNNs),
though some authors identify Deep Learning with CNNs.%\comment{R. aggiunto e rimossa frase non chiara}
%They are already part of GDL as they are usually employed on grid-like data. 
We shall describe briefly one
dimensional convolutions, though image classification requires two dimensional
ones, see the Appendix \ref{mlp-app} for more details.

One dimensional convolutions arise as discrete
analogues of convolutions in mathematical analysis, more specifically with an integral kernel (integral transform).
%\comment{R. added for clarity}
Let us consider a vector $x \in \R^d$. We can define a simple one dimensional convolution as follows:
  $$
  \mathrm{conv1d}:\R^d \lra \R^{d-r},\quad
  (\mathrm{conv1d}(x))_i=\sum_{j=1}^{r} K_{j}x_{i+j}
  $$
  where $K$ is called a \textit{filter} and is a vector of weights
  in $\R^r$, $r<d$. 
We shall describe in detail and in a more general way two (and higher) dimensional
convolutions in the Appendix \ref{mlp-app}. 
CNNs networks  typically consist of several layers of convolutions, followed by few linear layers, %as described above.
separated by RELU functions.
  
\subsection{Loss Function}\label{loss-subsec}

The loss function for a given image $x$  measures how accurately the algorithm
is performing its prediction, once a score $s(x,w)$ is
assigned to $x$, for given set of weights $w$. 
A popular choice for the loss
function is the \textit{cross entropy loss}, defined as follows:
%terms of the score function $s$:
$$
L(x,w)= -  \log{\frac{e^{s_{y_x}(x,w)}}{ \sum_{j=1}^C e^{s_j(x,w)}}}
$$
where $C$ is the number of classes in our classification task and $y_x$ is the
label assigned to the datum $x$, that is a number between $0$ and $C-1$, corresponding to the class of $x$ 
(in our main examples, $x$ is an image). %\comment{R added a precisation}

\medskip
In mathematics the
\textit{cross-entropy} of the probability distribution $p$ 
relative to the probability distribution $q$ 
is defined as:
$$
H ( q , p ) = - \bE_q[\log p] =-\sum_{i=1}^C q_i
\log p_i%= \sum_{i=1}^C p_i  
$$
where $\bE_q[\log(p)]$ denotes the expected value of the logarithm of %%\comment{F. Is the log of p, right?}
$p$ according
to the distribution $q$ and the last equality is the very definition of it
in the case of discrete and finite probability distributions.

In our setting, $q$ represents the ground truth distribution and
it is a probability mass distribution: 
\beq\label{qmass}
q_i(x)=\left\{
\begin{array}{cc} 1  & \hbox{if} \, x \, \hbox{has label} \, i \\
0  & \hbox{otherwise}
\end{array} \right.
\eeq
and depends on $x$ only.
The distribution $p$ is obtained from the score function $s$ via
the \textit{softmax} function $\mathcal{S}$, that is:
$$
p_i(x,w)= -  \log{\frac{e^{s_{y_x}(x,w)}}{ \sum_{j=1}^C e^{s_j(x,w)}}}
$$
where
$$
\mathcal{S}:\R^C \lra \R^C, \qquad
(\mathcal{S}(z_1\dots z_C))_i=\frac{e^{z_i}}{ \sum_{j=1}^C e^{z_j}}
$$
Notice that $p$ depends on both the image $x$ and the weights $w$.
With such $q$ and $p$ we immediately see that:
$$
\begin{array}{lr}
H ( q(x) , p(x,w) ) &= - \bE_q[ \log p ] = -\sum_{i=1}^C q_i(x)
\log p_i(x,w)= \\ \\
&=-\sum_{i=1}^C q_i(x) \log{\frac{e^{s_{i}(x,w)}}{ \sum_{j=1}^C e^{s_j(x,w)}}}=L(x,w)
\end{array}
$$
since $q_i(x)=1$ for $i=y_x$ and $q_i(x)=0$ otherwise.

In our setting the cross entropy of $q$ and $p$ coincides with
the Kullback-Leibler divergence. In fact, in general if $q$ and $p$
are two (discrete) probability distributions, we define their
Kullback-Leibler divergence as:
$$
\KL(q||p)=\sum_i q_i \log \frac{q_i}{p_i}
$$
Notice that $\KL(q||p)$ gives a measure on how much the probability
distributions $q$ and $p$ differ from each other.
We can write:
$$
\KL(q||p)=\sum_i q_i \log \frac{q_i}{p_i}=
\sum_i q_i\log(q_i) - \sum_i q_i\log(p_i)=
H(q)+H(q,p)
$$
where $H(q)=\sum_i q_i\log(q_i)$ is the \textit{Shannon entropy}
of the distribution $q$. $H(q)$ measures how much the distribution $q$ is 
{\sl spread}; it is zero for a mass probability distribution.
Hence in our setting, where $q(x)$ is as in (\ref{qmass}) we have
$$
\KL(q(x)||p(x,w))=H(q(x),p(x,w))=L(x,w)
$$

For a link to the important field of information geometry see
our Appendix \ref{infogeo}.

\section{Geometric Deep Learning: Graph Neural Networks} \label{gdl-sec}

Geometric Deep Learning (GDL), in its broader sense, is a novel and emerging field in machine
learning (\cite{bronstein2016, bronstein2021} and refs therein) 
employing deep learning techniques on datasets that are organized according to some underlying geometrical structure, for example the one of a graph. Convolutional Neural Networks are already part of GDL, as they are applied to some data that is organized in grids, as it happens for images, represented by matrices, where every entry is a pixel,
hence in a grid-like form.
When the data is organized on a graph, we enter the field of the so called \emph{Graph Neural Networks} (GNNs), deep learning algorithms that are built to leverage the geometric information underlying this type of data. In this paper we shall focus on GNNs for supervised node classification only, as we shall make precise later.
Data organized on graphs 
are commonly referred to as {\sl non-euclidean data}\footnote{Notice
  that since all graphs are embeddable into
  euclidean space (see \cite{godsil}), this terminology is in slight conflict with the
  mathematical one.}.
Data coming with a graph structure are ubiquitous,
from graph meshes in 3D imaging, to biological and chemical datasets: these data have usually the form of one or more graphs having a vector of features for each vertex, or node.
Because of the abundance of datasets of this form, as the graph structure usually carries important information usually ignored by standard Deep Learning algorithms (i.e. algorithms that would consider the node features only to solve the tasks at hand), Graph Neural Networks carry a great potential
for more applications, than just Deep Learning.

We start with a very brief introduction on graph theory, that may be
very well skipped by mathematicians familiar with it, then we will
concentrate on some score functions for Graph Neural Networks, discussing concrete examples.

\subsection{Graphs}\label{graph-sec}

We give here a very brief introduction to graphs and their
main properties. For more details see \cite{godsil}.

\begin{definition}
A \textit{directed graph} is a pair $G=(V, E)$, where $V$ is a finite set 
$V = {v_1, \dots, v_n}$ and
$E \subset V \times V$\footnote{We shall content ourselves to consider graphs having at most one vertex in each direction between two nodes.}. The elements of $V$ are called {\it nodes} or 
{\it vertices}, while the elements of $E$ are called {\it edges}.
If $(v_i,v_j)$ is an edge, we call $v_i$ the \textit{tail} and $v_j$
the {\it head} of the edge. For each edge $(v_i,v_j)$ we assume $i \neq j$, that is there are no
\textit{self loops}.

\medskip
We represent a graph in the plane
by associating points to vertices and arrows to
edges, the head and tail of an edge corresponding to the head and tail
of the arrow and
drawn accordingly as
in Fig \ref{fig:graph-ex}. We also denote the edge $(v_i,v_j)$ with $e_{ij}$.
A graph is {\it undirected}
if, whenever $e_{ij}%(v_i,v_j) 
\in E$, we have that also $e_{ji}%(v_i,v_j) 
\in E$.  %$(v_j,v_i) \in E$.
\end{definition}

%By convention, when we say {\sl graph} we mean an undirected graph.

\medskip
When the graph is undirected, we do not draw arrows, just links
corresponding to vertices
as in Fig.
\ref{fig:graph-ex}.

\vskip 0.05in
\begin{figure}[ht]
\begin{center}
\centerline{\includegraphics[width=5in]{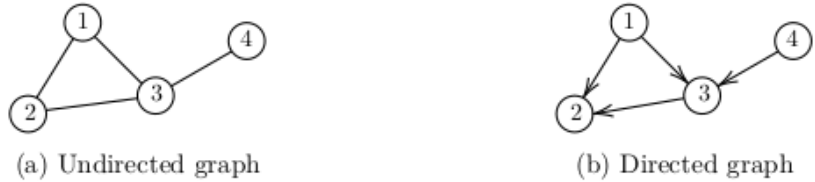}}
\vskip 0.05in
\caption{Directed and unidirected graph examples}
\label{fig:graph-ex}
\end{center}
\end{figure}
\vskip 0.05in

\begin{definition}
  The \emph{adjacency matrix $A$} of a
  graph $G$ is a $|V| \times |V|$
  matrix defined as
\begin{equation*}
    A_{ij} = 
    \begin{cases}
    1 & e_{ij} \in E \\
    0 & \text{otherwise}
    \end{cases}
\end{equation*}
\end{definition}
Notice that in an undirected graph $a_{ij}=1$ if and only if $a_{ji}=1$,
hence $A$ is symmetric.
Let us see the adjacency matrices for the graphs
shown in Fig. \ref{fig:graph-ex}.

\begin{figure}[h]
\begin{minipage}[b]{0.45\linewidth}
\centering
    \begin{equation*}
    A = 
    \begin{pmatrix}
    0 & 1 & 1 & 0 \\
    1 & 0 & 1 & 0 \\
    1 & 1 & 0 & 1 \\
    0 & 0 & 1 & 0 
    \end{pmatrix}
    \end{equation*}
    \caption{Adjacency matrix for the undirected graph in Fig.
      \ref{fig:graph-ex}.}
\label{fig:A_undirected}
\end{minipage}
\hspace{0.5cm}
\begin{minipage}[b]{0.45\linewidth}
\centering
    \begin{equation*}
    A = 
    \begin{pmatrix}
    0 & 1 & 1 & 0 \\
    0 & 0 & 0 & 0 \\
    0 & 1 & 0 & 0 \\
    0 & 0 & 1 & 0 
    \end{pmatrix}
    \end{equation*}
\caption{Adjacency matrix for the directed graph in Fig. \ref{fig:graph-ex}.}
\label{fig:A_directed}
\end{minipage}
\end{figure}

In machine learning is also important to consider the case of a
\textit{weighted adjacency matrix} $W=(w_{ij})$ associated to a graph:
%we replace 1 in the adjacency matrix
it is a $|{V}| \times |{V}|$ matrix with real entries,
with $w_{ij} \neq 0$ if and only if $e_{ij}$ is an edge.
So, effectively, it is a way to associate to an edge a weight.
Similarly, we can also define the \textit{node weight matrix}, it is a
$|{V}| \times |{V}|$ diagonal matrix and allows us to associate
a weight to every node of a graph.

\begin{definition}
  Given an undirected graph $G=(V,E)$, we define the \emph{node degree}
  of the node $v$ as the number of edges connected with $v$. %a given node.
  The \emph{degree matrix $D$} %of an undirected graph $G=(V,E)$
  is a
  $|V| \times |V|$ diagonal matrix with
  the degree of each node on its diagonal, namely $D_{ii} = \deg(v_i):=\sum_j A_{ij}.$
%\end{definition}
%\medskip
We refer to the set of vertices connected with a node $v_i$ as the
\emph{neighbourhood $\mathcal{N}\left( v_i \right)$} of the vertex $v_i$.
\end{definition}

The concept of node degree
and node neighbourhood
will turn out to be very important in Graph Neural Networks.

\medskip
We now introduce the {\sl incidence matrix}.

\medskip
Let $G=(V,E)$ be a graph.
Let $C(V)$ denote the vector space of real valued functions on
the set of vertices $V$ and $C(E)$ the vector space of real valued
functions on edges:
$$
C(V)=\{f:V \lra \R\}, \qquad C(E)=\{g:E \lra \R\}
  $$
  We immediately have the following identifications:
  \beq\label{ident}
  C(V) = \R^{|V|}, \quad C(E) = \R^{|E|}
  \eeq
  obtained, for $C(V)$,
  by associating to a function a vector whose $i^\mathrm{th}$ coordinate
  is the value of $f$ on the $i^\mathrm{th}$ vertex; %in the case of $C(V)$;
  the case of $C(E)$ being the same.

\begin{definition}
  Let $G=(V,E)$ be a directed graph and $f \in C(V)$. We define the
  \emph{differential} $df$ of $f$ as the function:
\begin{align*}
    df \colon E & \longrightarrow \mathbb{R} \\
    e_{ij} & \longmapsto f(v_j) - f(v_i) .
\end{align*}
We define the \textit{differential} as
$$
d: C(V) \lra C(E), \qquad f \mapsto df
$$
\end{definition}
We can think of $d$ as a boundary operator in cohomology, viewing the
graph as a simplicial complex. We will not need this interpretation
in the sequel.

\medskip
It is not difficult to verify that in the natural identification (\ref{ident})
the differential is represented by the matrix:
\begin{equation*}
    X_{ij} = 
    \begin{cases}
    -1 & \text{if the edge $v_j$ is the tail of the edge $e_i$} \\
    1 & \text{if the edge $v_j$ is the head of the edge $e_i$} \\ 
    0 & \text{otherwise}
    \end{cases}.
\end{equation*}
Notice that $X$ row indeces correspond to edges, while the column ones to vertices
(we number edges here regardless of vertices). 

In other words if $f=(f_i)$ is a function on $V$, identified with
a vector in $\R^{|V|}$, we have:
$$
(df)_k=\sum_{i=1}^{|V|} X_{ki}f_i, \qquad df=((df)_k) \in \R^{|E|}
$$

The matrix $X$ is called the \textit{incidence matrix} of the directed graph
$G=(V,E)$. Let us see an example to clarify the above statement.

\begin{example}
  For the graph in Fig. \ref{fig:A_undirected},  we have 
\begin{equation*}
 df=   \begin{pmatrix}
    -1 & 1 & 0 & 0 \\
    -1 & 0 & 1 & 0 \\
    0 & 1 & -1 & 0 \\
    0 & 0 & 1 & -1 
    \end{pmatrix}
    \begin{pmatrix}
      f(v_1) \\
          f(v_2) \\
    f(v_3) \\
    f(v_4) 
    \end{pmatrix} = 
    \begin{pmatrix}
    - f(v_1) + f(v_2) \\
    - f(v_1) + f(v_3) \\
    f(v_2) - f(v_3) \\
    f(v_3) - f(v_4)
    \end{pmatrix} = 
    \begin{pmatrix}
    df(e_{1}) \\
    df(e_{2}) \\
    df(e_{3}) \\
    df(e_{4}) 
    \end{pmatrix}
\end{equation*}
\end{example}

Notice that, suggestively, many authors, including \cite{bronstein2016} write the symbol
$\nabla$ for $X$, because this is effectively the discrete analogue of the
gradient in differential geometry. 

\subsection{Laplacian on graphs}%{Spectral Methods}

In this section we define the {\sl laplacian} on undirected graphs
according to the reference \cite{bronstein2016}. Notice that there are other
definitions, see \cite{bauer, godsil}, but we adhere to \cite{bronstein2016}, because
of its Geometric Deep Learning significance.

We start with the definition of laplacian matrix, %as in \cite{bronstein},
where, for the moment, we limit ourselves to the case in
which both edges and nodes have weight 1 (see Sec. \ref{graph-sec}).
We shall then relate the laplacian matrix with the incidence matrix
and arrive to a more conceptual definition in terms of the differential.

\begin{definition}
\label{laplacian}
The \emph{laplacian matrix $L$} of an undirected graph is a
$|{V}| \times |{V}|$ matrix, defined as
\begin{equation*}
    L \coloneqq D - A,
\end{equation*}
where $A$ and $D$ are the adjacency matrix and degree matrix, respectively.
\end{definition}

We now introduce the concept of \textit{orientation}.

\begin{definition}
  An \textit{orientation} of an undirected graph is an
  assignment of a direction to each edge, turning the
  initial graph into a directed graph.
\end{definition}

\begin{figure}[h!]
    \centering
    \includegraphics[width=0.55\textwidth]{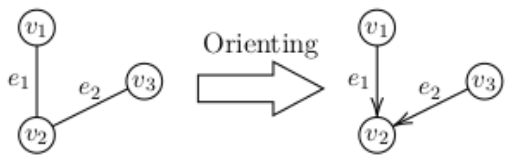}
    \caption{An orientation on a graph}
    \label{fig:orient}
\end{figure}

The following result links the laplacian on graph with
the incidence matrix.

\begin{proposition}\label{lapl-prop}
  Let $G=(V,E)$ be an undirected graph and let us fix an orientation.
  Then: %%\comment{R. c'e' un problema di segno}
  $$
  L=X^tX
  $$
  where $X$ is the incidence matrix associated with the given orientation.
\end{proposition}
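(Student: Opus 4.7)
The plan is to compute the entries of $X^t X$ directly from the definition of the incidence matrix and check that they coincide with those of $L = D - A$. Since both $L$ and $X^t X$ are $|V| \times |V|$, it suffices to match them entry by entry, treating the diagonal and off-diagonal cases separately.

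First I would write
\[
(X^t X)_{ab} = \sum_{k=1}^{|E|} X_{ka}\, X_{kb},
\]
and recall that, by construction, the $k$-th row of $X$ has exactly two nonzero entries, namely $+1$ in the column corresponding to the head of $e_k$ and $-1$ in the column corresponding to the tail. For the diagonal entries ($a=b$), the summand $X_{ka}^2$ is either $1$ or $0$, and it equals $1$ precisely when $v_a$ is an endpoint (head or tail) of $e_k$. Summing over $k$ therefore counts the edges incident to $v_a$, which is $\deg(v_a) = D_{aa}$.

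For the off-diagonal case $a\neq b$, I would observe that $X_{ka}X_{kb}$ vanishes unless both $v_a$ and $v_b$ are endpoints of $e_k$. Since by assumption there is at most one edge between any two vertices (and no self-loops), there is at most one such $k$; when it exists, one of $X_{ka}, X_{kb}$ equals $+1$ and the other $-1$ (depending on which endpoint is the head), so the product is $-1$. Hence
\[
(X^t X)_{ab} = \begin{cases} -1 & \text{if } v_a \text{ and } v_b \text{ are adjacent}, \\ 0 & \text{otherwise}, \end{cases}
\]
which is exactly $-A_{ab}$. Combining the two cases yields $X^t X = D - A = L$, as desired.

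There is really no serious obstacle here; the one point worth flagging is that the identity holds for \emph{any} choice of orientation, even though $X$ itself depends on it. This is because the orientation only affects which of the two nonzero entries in each row of $X$ carries the $+1$ and which carries the $-1$: flipping the sign of a row of $X$ leaves $X^t X$ invariant, since each row contributes $X_{k\cdot}^t X_{k\cdot}$ and the signs appear pairwise. Thus the formula $L = X^t X$ is well-defined despite the auxiliary choice made to define $X$.
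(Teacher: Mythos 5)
Your proof is correct and is exactly the entry-wise computation of $(X^tX)_{ab}$ that the paper has in mind but leaves to the reader (illustrating it only with the worked example $L = X^tX$ on the three-vertex graph): diagonal entries count incident edges giving $D$, off-diagonal products give $-A_{ab}$. Your closing remark on why the result is independent of the orientation, since flipping the sign of a row of $X$ leaves $X^tX$ unchanged, correctly justifies the paper's assertion that the identity does not depend on the chosen orientation.
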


Notice that this result is independent from the chosen orientation.
The proof is not difficult and we leave it to reader; the next example
will clearly show the argument.

  \begin{example}
    Let us consider the graph of Fig. \ref{fig:orient}.
We can write the Laplacian according to the definition as
\begin{equation*}
    L = D - A = 
    \begin{pmatrix}
    1 & -1 & 0 \\
    -1 & 2 & -1 \\
    0 & -1 & 1
    \end{pmatrix}.
\end{equation*}
This is the same as writing $L$ as 
\begin{equation*}
    L = X^t X = 
    \begin{pmatrix}
    -1 & 0 \\
    1 & 1 \\
    0 & -1 
    \end{pmatrix}
    \begin{pmatrix}
    -1 & 1 & 0 \\
    0 & 1 & -1 
    \end{pmatrix} = 
    \begin{pmatrix}
    1 & -1 & 0 \\
    -1 & 2 & -1 \\
    0 & -1 & 1
    \end{pmatrix}.
\end{equation*}
\end{example}

  \begin{remark}
    We warn the reader about possible confusion in the notation present
    in the literature.
    \begin{enumerate}
    \item The incidence matrix of an oriented undirected graph is not the
      same as the incidence matrix of an undirected graph
      (that we have not defined in here) as one can find it for
      example in \cite{godsil}.
    \item The laplacian of a directed graph, as defined for example
      in \cite{bauer}, is not the laplacian of the undirected graph with an
      orientation and it is not expressible, at least directly, in terms of
      its incidence matrix.
    \end{enumerate}

    Originally in the pioneering works \cite{bronstein2016} and \cite{kw} the
    authors discuss undirected graphs only and this is the reason why we
    have limited our discussion to those, though in practical applications
    directed graphs may turn to be quite useful.
    \end{remark}

  We end this section with a comment regarding a more intrinsec definition
  of laplacian.

  \begin{definition}
    Let $G=(V,E)$ be an undirected graph with an orientation
    and let $C(V)$, $C(E)$ the
    functions on $V$ and $E$ as above. We define the \textit{laplacian operator} as:
    $$
    \mathcal{L}: C(V) \lra C(V), \qquad \mathcal{L}=d^* d
$$
    where $d^*:C(E)^* \lra C(V)^*$ is the dual morphism and we identify
    $C(E)$ and $C(V)$ with their duals with respect to their canonical bases.
  \end{definition}

  We leave to the reader the easy check that, once we use the matrix
  expression for $d$ and $d^*$, that is $X$ and $X^t$,
  we obtain the same expression for
  the laplacian as in
  Prop. \ref{lapl-prop}.

  \begin{remark} Let $G=(V,E)$ be an undirected graph with an orientation.
    In the suggestive interpretation of the incidence matrix as the
    discrete version of the
    gradient of a function $f$ on vertices, we have the correspondence:
    $$
    \begin{array}{c}
    \mathcal{F}:\R^n \lra \R, \\ \\ \nabla(\mathcal{F})= \begin{pmatrix} \partial_{x_1}\mathcal{F} \\ \vdots
      \\  \partial_{x_n}\mathcal{F}  \end{pmatrix} 
  \end{array} \iff \qquad \begin{array}{c} f:V \lra \R \\ \\
      X({f})=\begin{pmatrix}
      {f}(v_{j_1})-{f}(v_{i_1}) \\ \vdots \\
      f(v_{j_n})-f(v_{i_n}) \end{pmatrix}\end{array}
      $$
      for edges $(i_1,j_1)$, $\dots$, $(i_n,j_n)$ and where $\mathcal{F}$ denotes a smooth
      function on $\R^n$.
      If we furtherly
      extend this correspondence %is then extended
      to the laplacian operator:
    $$
    \Delta(\mathcal{F})=(\nabla^t \nabla)(\mathcal{F})=\partial^2_{x_1}\mathcal{F} + \dots +
      \partial_{x_n}^2\mathcal{F}  \quad \iff \quad L(f)=(X^tX)(f)
      $$
      thus justifying the terminology ``laplacian'' for the operator
      $L$ we have defined on $C(V)$.
  \end{remark}

  In \cite{bronstein} this correspondence is furtherly exploited to
  perform spectral theory on graphs, introducing the discrete analogues
  of divergence, Dirichlet energy and Fourier transform. We do not
  pursue further this topic here, sending the interested reader to
  \cite{bronstein2016} for further details.

  \medskip
  We also mention that a more general expression of $L$ is obtained
  by writing:
  $$
  L_g=W_v(D-W_a)
  $$
  where $W_v$ is a diagonal matrix attributing a weight to each
  vertex and $W_a$ is the weighted adjacency matrix. So $L_g=L$
  if we take $W_v=I$ and $W_a=A$.

  A remarkable case occurs for $W_v=D^{-1}$ and $W_a=A$:
  $$
  L_d:=D^{-1}(D-A)=I-D^{-1}A
  $$
  in this case we speak of \textit{diffusive (or normalized) laplacian}, since multiplying
  $L$ by the inverse of the node degree matrix amounts
  in some sense to take the average of each coordinate, associated to a node,
  by the number of links of that node. We will come back to $L_d$ in our
  next section.
  
\subsection{Heat equation}\label{heat-sec}

Deep Learning success in supervised classification tasks
is due, among other factors, to the convolutional layers, whose filters (i.e. kernels)
enable an effective pattern recognition, for example in image classification
tasks.

In translating and adapting
the algorithms of Deep Learning to graph datasets, it is
clear that convolutions and filters cannot be defined in the same way,
due to the topological difference of node neighbours: different nodes
may have very different neighbourhoods, thus making the definition
of convolution difficult.

In Graph Neural Networks, the convolutions are replaced
by the {\sl message passing mechanism} in the {\sl encoder}
that we shall discuss in the sequel.
Through this mechanism
the information from one node is {\sl diffused} in a way that
resembles heat diffusion in the Fourier heat
equation, as noticed in \cite{bronstein2016}.

We recall that the classical heat equation, with
no initial condition, states the following:
\begin{equation}
\label{eq:heat_eq_classic}
%\begin{cases}
    \partial_t h(x,t) = c \Delta h(x,t) \\
    %h(x,0) = h_0 (x) 
%\end{cases}
\end{equation}
where $h(x,t)$ is the temperature at point $x$ at time
$t$ and $\Delta$ is the laplacian operator, while
$c$ is the \emph{thermal diffusivity constant}, that we set equal to -1.

\medskip
We now write the analogue of such equation in the graph
context, following \cite{bronstein2016}, where now $x$ is replaced by
a node in an undirected, but oriented, graph and $h$ is a function on the vertices
of the graph, depending also on the time $t$, which
is discrete. We then write:

\begin{equation*}
  h_{t+1}(v) - h_t(v) =
-  L_d(h_t(v))
%  D^{-1} A h_t(v) - h_t(v) \Longrightarrow h_{t+1}(v) = D^{-1} A h_t(v) .
\end{equation*}
where $L_d$ is the \textit{diffusive laplacian}.
We now substitute the expression for the laplacian $L_d$ obtaining:
\beq\label{heat-eq}
h_{t+1}(v) = h_t(v) - [(I-D^{-1}A)h_t](v)= (D^{-1}A h_t)(v)=
\sum_{u \in \cN(v)} \frac{h_t(u)}{\deg(v)}
\eeq
where the last equality is a simple exercise.
Hence:
\beq
h_{t+1}(v) = \sum_{u \in \mathcal{N}(v)} \frac{h_t(u)}{\deg(v)}
\eeq
where $\mathcal{N}(v)$ denotes the neighbourhood of the vertex $v$.

We are going to exploit fully this analogy in the description
of graph convolutional networks in the next section.

\subsection{Supervised Classification on Graphs} %in Graph Neural Networks}

In many applications, data is
coming naturally equipped with a graph structure.
For example, we can view 
a social network as a graph, where each node is a user and 
edges are representing, for example, the friendship between two users.
Also, in chemistry applications, we may have a graph associated to
a chemical compound and a label provided for each graph
classifying the compound. For example, we can
have a database consisting of protein molecules,
given the label zero or one depending on
whether or not the protein is an enzyme. % (see \cite{enzyme}).

\medskip
%We focus on
Supervised classification tasks on graphs can be roughly 
divided into the following categories: %them roughly follows:
\begin{itemize}
\item {\it node classification}: given a graph with a set of features for each node and a labelling on
the nodes, find a function that given the graph structure and the node features predicts the labels on the nodes;
\item {\it edge classification}: given a graph with a set of features for each node and a labelling on
the edges (like the edge exists or not), find a function that given the graph structure and the node features predicts the labels on the edges\footnote{This problem requires care, as for the training we may need to modify the graph structure.};
\item {\it graph classification}: given a number of graphs with features on the nodes, possibly external global features and a label for
each one, find a function to predict the label of these graphs (and similar ones).
\end{itemize}
For the node classification tasks, the split train/valid/test is performed on the nodes, i.e. only a small subset of the nodes is used to train the score function and only their labels intervene to compute the loss, as we will see. The entire graph structure and data is however available during training as GNNs typically leverage the network topological structure to extract information. 
Therefore not only the information of the nodes available for training is used, in the sense that also the features, \emph{but not the labels}, of the nodes not used in the training are a priori available during the training process. For this reason, in this context many authors speak of \emph{semi-supervised} learning. There are also other classification tasks, 
that we do not mention here, see \cite{bronstein2016} for more details. %
For clarity's sake we focus on the node classification task and the Geometric Deep Learning algorithms we will describe will fall in the category of \emph{Graph Neural Networks} (GNNs).
We shall also examine a key example,
the Zachary Karate Club (\cite{kw}) dataset, to elucidate the theory.

\medskip
The ingredients for node classification with Graph Neural Networks are the same as for
image classification in Deep Learning (see Sec. \ref{training-sec}), namely:
\begin{itemize}
\item 
the {\sl score function} assigning to each node with given features and
to a given set of weights in $\R^p$ a score for each class;
\item the {\sl loss function} measuring how accurate is the prediction 
obtained via the score function;
\item the {\sl optimizer} necessary to determine the set of weights, which
perform best on the training set.
  \end{itemize}

For Graph Neural Networks loss function and optimizer are
chosen in a very similar way as in Deep Learning and the optimization
procedure is similar. %practically the same. 
%%\comment{R p or d?}
The most important difference occurs
in the score function, hence we focus on its description. In 
Deep Learning for a set of weights $w$, the score function $s(x,w)$ %is a function that given the vector of features 
assigns to the datum %vector of features
$x\in\R^d$ %of a given sample 
(e.g. image), %returns us with 
the vector of scores. 
On the other hand, in GNNs, the topological information regarding the graph plays a key role. 
The vector of features $x\in\R^d$ of a node is 
\emph{connected with the others according to a graph structure} and therefore the information of the graph $G$ we are considering, represented by its adjacency matrix $A_G$, needs to be given as input as well. This dependence is often made explicit by denoting the resulting score functions as $s(x,w,A_G)$. 
Let us see a concrete example in more detail in the next section.

\subsection{The Score function in Graph Neural Networks}

In the supervised node classification problem, we assume to have
an undirected graph $G=(V,E)$ and for each node of the graph
we have the following two key information:
the \textit{features} associated to the node and
the \textit{label} of the node, which is, as in ordinary Deep Learning,
a natural number between $0$ and $C-1$, $C$ being the
number of classes.

We can view the features of the nodes as a vector valued function
$h$ on the nodes, hence $h:V\lra \R^n$ if we have $n$ features. % \in C(V)^n$,
Hence $h \in C(V)^n=(\R^{|V|})^n\cong\R^{|V|\times n}$
and we denote $h(v)$ or $h_v\in\R^n$, the value of the feature $h$ at the
node $v$. When $n=1$ this is 
equivalent to the $v^{\mathrm{th}}$ 
coordinate of $\R^{|V|}$.
The $n$ dimensions should be thought as $n$ \emph{feature channels} according to the philosophy introduced in \cite{bodnar2022} or \cite{SheafNN}.

The score function in the GNNs we shall consider consists
%is divided as the composition 
of two distinct parts:

\begin{enumerate}
\item the {\it encoder}, which produces an embedding of the graph features
  in a latent space:
  $$
  E: \R^{|V|\times n} \lra \R^{|V|\times c},
  $$
  %(keeping track of features!)
\item the {\it decoder}, which predicts the score of each
  node based on the embedding obtained via the encoder.
  %to match node similarity.
  This is usually a multilayer perceptron
  (see Sec. \ref{score-sec}) or even just a linear classifier.
  \end{enumerate}
We therefore focus on the description of the encoder only.

\medskip
The encoder implements what is called the \textit{message passing
  mechanism}. There are several ways to define it, we give
one simple implementation consisting in a sequence of graph convolutions described by equation (\ref{eq:convolution_aggregator}) below, the others being a variation on this theme.
We start with the initial feature $h_v^0$ at each node $v$ and we define
the next feature recursively as follows:
\begin{equation}
\label{eq:convolution_aggregator}
h_v^{k} = \sigma
\left(W_k \sum _{u\in \mathcal{N}(v)} \frac{{h}_u^{k-1}}{\deg (v)} +B_k\cdot{h}_v^{k-1}\right) \qquad k=1,\dots, \ell
\end{equation}
where $\ell$ is the number of layers in the encoder, $\sigma$ is a generic
non linearity, for example the RELU and $W_k$, $B_k$ are matrices of weights of the appropriate size,
which are determined during the training. We say that equation (\ref{eq:convolution_aggregator}) defines a message passing mechanism. Training typically takes
place with a stochastic gradient descent, looking for a local minimum
of the loss function.
Notice that the equation above reminds us of the heat equation (\ref{heat-eq}) we introduced in the context of graphs having scalars as node features.
Indeed, in the case all $B_k=0$ for all $k$, if we denote as $H_{k}\in\R^{|V|\times p}$ the matrix having as rows all node features after having applied the first $k$ layers, the 
$k$-step of the encoder in (\ref{eq:convolution_aggregator}) above, can be written concisely, for all the node features together, as $$ %E^k(H_{k},A_G)
H^k=\sigma(D^{-1}A_GH^{k-1}W_k^t)$$ where $D$ is the diagonal matrix of degrees and $A_G$ is the adjacency matrix of $G$. As a consequence, the encoder can be thought, modulo non linearities, as a "1-step Euler discretization" of a PDE of the form $\dot{H}=-\Delta H(t)$, $ H(0)=H_0$ i.e. as following the discrete equation $$H(t+1)-H(t)=-L_dH(t)$$

where we have denoted as $L_d$ the diffusive laplacian of Section \ref{heat-sec} and $H_0$ is the starting datum of node features. By modifying equation (\ref{eq:convolution_aggregator}) we obtain some notable graph convolutions:
\begin{itemize}
    \item If $B_k=W_k$ we obtain the influential Kipf and Welling graph convolution \cite{kw} that can be written more concisely as 
    \begin{equation}\label{kwconv}
        f(H_{k},A_G)=\sigma(D^{-1}\widehat{A_G}H_{k-1}W_{k}^t)
    \end{equation}
    where $\widehat{A_G}=A_G+Id$. This convolution can be seen to arise as a 1-step Euler discretization of an heat equation on a graph where one self loop is added to each vertex.
    \item If we do not normalize by the degress, equation \ref{eq:convolution_aggregator} gives us an instance of the influential GraphSAGE operator from \cite{GRLHamilton}.
\end{itemize}

In the next section we give a concrete example of the message
passing mechanism and how the encoder and decoder work.

\subsection{The Zachary Karate Club}\label{zach-sec}

The Zachary Karate club is a social network deriving from a 
real life karate club studied by W. Zachary
in the article \cite{kw} and can be currently regarded
as a toy benchmark dataset for Geometric Deep Learning.
The dataset consists in 34 members of the 
karate club, represented by the nodes of a graph, with %documenting
154 links between nodes, corresponding 
to pairs of members, who interacted also outside the club. 
%After some time, 
Between 1970 and 1972, the club became divided into
four smaller groups of people, over 
an internal issue regarding the price of
karate lessons. Every node is thus labeled by an integer
between 0 and 3,
%one of four classes, 
%representing the position 
corresponding to the opinion of each member of the club, with respect
to the karate lesson cost issue (see Fig. \ref{fig:karateclub}).
Since the dataset comes with no features (i.e. is \emph{feature-less}) associated to nodes, but
only labels, we assign (arbitrarily) 
the initial matrix of features to be the identity i.e. $H_0=\mathrm{Id}_{34}$.

\begin{figure}[h!]
    \centering
    \includegraphics[width=0.75\textwidth]{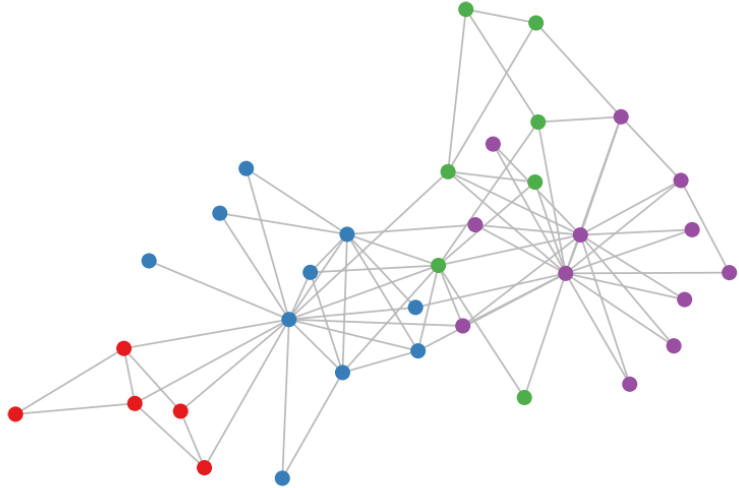}
    \caption{Representation of the Zachary Karate Club
      dataset (see \cite{kw}). 
      Different colors correspond
      to different labels.}
    \label{fig:karateclub}
\end{figure}

\begin{figure}[h!]
    \centering
    \includegraphics[width=0.75\textwidth]{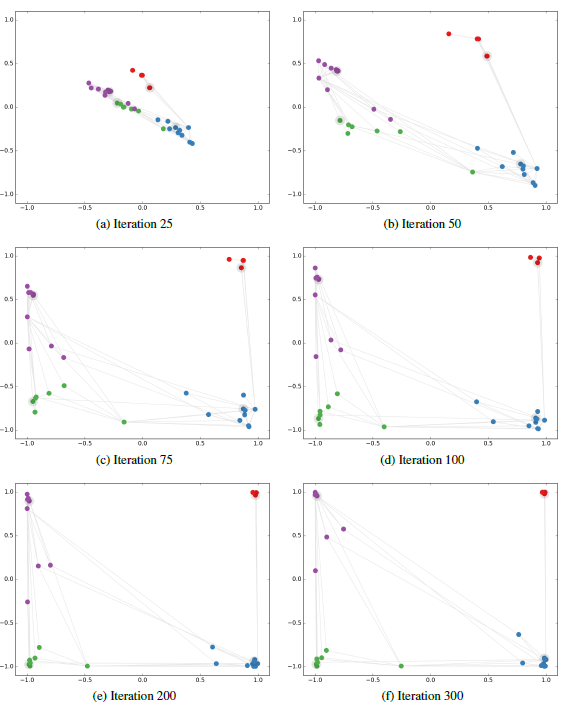}
      \caption{Representation of node embeddings at different epochs (\cite{kw}).
 Different colors correspond
      to different labels.   }
    \label{fig:4_embedding}
\end{figure}

We represent schematically below a 3 layers GNN, used in \cite{kw} on this dataset the encoder-decoder framework,
reproducing at each step the equation
(\ref{kwconv})\footnote{A particular case of (\ref{eq:convolution_aggregator}) with $l=3$}. In other
words we have functions (conv1), (conv2), (conv3), given by
the formula (\ref{kwconv}) for weight matrices $W_k$
of the appropriate size, with $\sigma=tanh$
%borrowing the terminology of Pytorch:

\medskip
%GCN(
  
$\left.
\begin{tabular}{l}
\qquad  (conv1): $\R^{34} \lra \R^4$,  $h_v^{1} \mapsto h_v^2$ \\ %GCNConv(34, 4) \\
\qquad  (conv2): $\R^{4} \lra \R^4$, $h_v^{2} \mapsto h_v^{3}$\\  %GCNConv(4, 4) \\
\qquad  (conv3): $\R^{4} \lra \R^2$, $h_v^{3} \mapsto h_v^{4}$%GCNConv(4, 2)
\end{tabular}\right\}$ ENCODER

\medskip
\hskip1cm (classifier): $\R^2 \lra \R^4$,  $h_v^{4} \mapsto
\ell_W(h_v^{4})
%Linear(2, 4)) 
%\quad \} 
\, \,$ 
DECODER

\medskip
The encoding space has dimension $2$ and the \textit{encoding function}
$E$ obtained is:
$$
E:\R^{34} \lra \R^2, \quad E=\mathrm{conv3} \circ \mathrm{conv2} \circ \mathrm{conv1}
$$
The decoder consists in a simple linear classifier
$\ell_W: \R^2 \lra \R^4$, where $W$ a $4 \times 2$ matrix of weights,
the image being in $\R^4$, since we have 4 labels.

%More precisely we have:
%$$
%\begin{array}{rl}
%(conv1): GCNConv(34, 4) & h_v^{(0)}=z^0 \mapsto h_v^{(1)} \\
%(conv2): GCNConv(4, 4) & h_v^{(1)} \mapsto h_v^{(2)}\\
%(conv3): GCNConv(4, 2) & h_v^{(2)} \mapsto h_v^{(3)}=z_v
%\end{array}
%$$ 
%$W^1$ is a $34 \times 4$ weight matrix (same for ALL nodes)
%$W^2$ is a $4 \times 4$ weight matrix (same for ALL nodes)
%$W^3$ is a $4 \times 2$ weight matrix (same for ALL nodes)
%Here $K=3$: number of layers.

\medskip
%{\bf  Zachary Karate club: Training, validation, test}
In the training, according to the method elucidated in Sec.
\ref{training-sec}, we choose randomly 4 nodes, one for each class, with their labels
and we hide the labels of the remaining 30 nodes, that we use
as validation. We have no test dataset.
In Fig. \ref{fig:4_embedding}, we plot the node
embeddings in $\R^2$, that is the result of the 34 node coordinates
after the encoding. As one can readily see from Fig \ref{fig:4_embedding},
our dataset, i.e. the nodes, appears more and more linearly
separable, as we proceed in the training. %time, measured
%in epochs. 
The final accuracy is exceeding 70\% on the 4 classes, a remarkable
result considering that our training dataset consists of 4 nodes
with their labels. This is actually a feature of Graph Neural Networks algorithms: typically we need far
less labelled samples in the training datasets, in comparison with the Deep Learning algorithms. %This happens because
This is due to the fact that the topology of the dataset already encodes
the key information for an effective training and that the features of the validation set are a priori visible to the training nodes.
This important characteristic makes this algorithm
particularly suitable for biological datasets, though it leaves
to the researcher the difficult task to translate the information
coming from the dataset into
a meaningful graph.  
%\begin{itemize}
%\item {\bf Training+Validation}: 4 randomly chosen nodes
%\item {\bf Test}: 30 nodes
%\item {\bf Accuracy}: 71\% on 4 classes 
%\end{itemize}
%Notice that after the encoding we obtain a \textit{linearly
%separable} dataset, making linear classifier really effective on it.
%We give here a schematic description of encoder-decoder architecture on the
%Zachary karate club dataset:

\subsection{Graph Attention Networks}
\label{gat-sec}

We conclude our short treatment with the 
\emph{Graph Attention Networks} (GATs) \cite{gat}. The diffusivity nature
of the message passing mechanism makes all edges connected to the
same node equally important, while it is clear that in some concrete
problems, some links may have more importance than others and we might have links that exists even if the vertices have very different labels (heterophily). The convolution described in the previous section may not perform well in this setting. A first fix to this problem is to consider weighted adjacency matrices, whose weights can be learnt together with the other weights of the network during training.

Another approach is given by the \emph{graph attentional layers}, that we are going to describe, that have the purpose to assign a weight to the edges connected to
a given node with a particular "attention mechanism" to highlight (or suppress) their importance in diffusing
the features from the node. There are also more advanced algorithms to handle hetherophilic 
datasets\footnote{Hetherophilic dataset are those whose linked nodes do not share similar features.},
%\comment{R. non abbiamo parlato di hetherophilic}
like Sheaf Neural Networks (see \cite{bodnar2022}), but we shall not discuss them here. We describe in detail a single attention convolution layer, introduced in the seminal \cite{gat}.
Let $W$ be a weight matrix and
${h}_v, {h}_u \in \mathbb{R}^d$ the (vector valued)
%%\comment{F. I shall remove this vector notation}
features of nodes $v$ and $u$.
To start with, we define an {\it attention mechanism} as follows:
$$
\begin{array}{ccc}
    a:  \mathbb{R}^{d'} \times \mathbb{R}^{d'} &\longrightarrow & \mathbb{R} \\
          W {h}_v , W {h}_u &\longrightarrow & e_{vu} %= a(W  {h}_v, W  {h}_u) .
\end{array}
$$

The coefficients $e_{vu}$ are then normalized using the softmax function:
\begin{equation*}
    \alpha _{vu} = %\text{SOFTMAX}_u (e_{vu}) = 
    \frac{\exp (e_{vu})}{\sum _{w \in \mathcal{N}(v)} \exp (e_{vw})} .
\end{equation*}
obtaining the {\it attention coefficients} $\alpha_{vu}$.
The message passing mechanism is then suitably modified to become:
\begin{equation*}
     {h}_v ' = \sigma \left( \sum_{u \in \mathcal{N}(v)} \alpha_{vu} W  {h}_u \right) .
\end{equation*} 

In the paper \cite{gat}, Velickovic \textit{et al.} choose the following attention mechanism.
%$a(W  {h}_v, W  {h}_u)$ . 
First, we concatenate the vectors $W  {h}_v$ and $W  {h}_u$ of dimension $d'$
and then we perform a scalar multiplication by a weight vector $ {a} \in \mathbb{R}^{2 d'}$. Then, as commonly happens, we compose with a LeakyRELU non-linearity 
(a modified version of the RELU)\footnote{The choice of the activation function here is not fundamental, we mention the Leaky Relu only to stick to Velickovic 
\textit{et al.} treatment.}. The resulting attention coefficients are then
%as a single layer feedforward neural network characterised by a weight vector $ {a} \in \mathbb{R}^{2 d'}$, applied to the concatenation of vectors $W  {h}_v$ and $W  {h}_u$, followed by a LeakyRELU non-linearity (a modified version of the RELU).
%We can thus express the attention coefficients as
\begin{equation*}
    \alpha _{vu} = \frac{\exp \left( \sigma \left(  {a}^{\top} [ W  {h}_v || W  {h}_u ] \right) \right)}
    {\sum _{w \in \mathcal{N}(v)} \exp \left( \sigma \left(  {a}^{\top} [W  {h}_v || W  {}_w ] \right) \right)}
\end{equation*}
where $||$ is the concatenation operation and $\sigma$ is
the LeakyRELU activation function. After having obtained the node embeddings $ h_v '$, Velickovic \textit{et al.} introduce an optional \emph{multi-head attention} that consists into concatenating $K$ identical copies of the embeddings $ {h}_v '$ that will become the node embeddings passed to the following layer. The final message passing becomes then 
\begin{equation*}
     {h}_v ' = \parallel_{i=0}^K\sigma \left( \sum_{u \in \mathcal{N}(v)} \alpha_{vu} W  {h}_u \right) .
\end{equation*}
$K$ is also called the number of \emph{heads}. A layer of this type in the encoder of a GNN is called Graph Attention Layer and a GNN whose encoder consists of a sequence of such layers is called Graph Attention Network (GAT).\\To see a concrete example of a classification problem successfully solved by GAT, we consider the example of the Cora dataset, which is studied also by 
Velickovic \textit{et al.} in \cite{gat}. 
The Cora dataset is an undirected graph consisting of
\begin{itemize}
\item 2708 nodes, each representing a computer science paper,
\item 5209 edges, representing paper citations.
\end{itemize}
Each node $v$ is given a feature $h_v$ consisting of a 1433-dimensional vector corresponding to a bag-of-words representation of the title of the document and a label assigning each node to one of 7 distinguished classes (Neural Networks, Case Based, Reinforcement Learning, Probabilistic Methods, Genetic Algorithms, Rule Learning, and
Theory). Velickovic \textit{et al.} build the following architecture:
$$
\begin{array}{rl}
\mathrm{(conv1)}: 
\mathrm{ELU}(\mathrm{GATConv}(1433, 8)) & \mathrm{heads}=8  \\
\mathrm{(conv2)}: \sigma(\mathrm{GATConv}(64, 7)) & \mathrm{heads}=1
\end{array}
$$ 
where we have denoted as $\mathrm{ELU}$ the Exponential Linear Unit activation function (a slight modification of the RELU, see \cite{Elu}) and as $\sigma$ the softmax activation used for the final classification. Using a training set consisting of 20 nodes per class, a validation set of 500 nodes (to tune the hyperparameters) and a test set of 1000 nodes, the above network achieves an accuracy of the 83\%.

{\bf Acknowledgements}. R.F. wished to thank Prof. S. Soatto, Dr. A. Achille
and Prof. P. Chaudhari for the patient explanations of the functioning
of Deep Learning. R.F. also wishes to thank Prof. M. Bronstein,
Dr. C. Bodnar for helpful discussions on the geometric deep learning on graphs. F.Z. thanks Dr. A. Simonetti for helpful discussion on both the theory and the practice of Geometric Deep Learning Algorithms.

\appendix

\section{Fisher matrix and Information Geometry} \label{infogeo}
In this appendix we collect some known facts about the Fisher information
matrix and its importance in
Deep Learning.
%its relation with the Kullback Leibler divergence. This will establish
Our purpose is to establish %with the purpose of establishing 
a dictionary between Information Geometry and
(Geometric) Deep Learning questions, to help with the research in both.
No prior knowledge beyond elementary probability theory
is required for the appendix.
We shall focus on {\sl discrete probability} distributions only, 
since they are the ones interesting for our machine learning applications.
For more details we send the interested reader to
\cite{amari}, \cite{martens} and refs. therein.

\medskip
Let $p(x,w)=(p_0(x,w), \dots, p_{C-1}(x,w))$
be a discrete probability distribution, representing
the probability that a datum $x$ is assigned a class among $0,\dots, C-1$.
In machine learning $p(x,w)$ is typically an {\sl empirical probability} and depends on
parameters $w \in \R^p$. As we described previously, during
training $p(x,w)$ changes and the very purpose of the training
is to obtain a $p(x,w)$ that gives a good approximation of the true
probability distributions $q(x)$ on the training set, and of course, also
performs well on the validation set.

\begin{definition}
  Let $p(x,w)$ be a discrete probability distribution, we define
  \textit{information loss} of $p(x,w)$ as 
\beq\label{amariloss}
  I(x,w)=-\log(p(x,w))
\eeq
\end{definition}

%The definition of $I$
In \cite{amari}, Amari refers to $I$ as the {\sl loss function},
however, given the importance of the loss function in Deep Learning, we prefer to
call $I(x,w)$ {\sl information loss}. Notice
that  $I(x,w)$ is a probability distribution.

\medskip
The information loss is very important in Deep Learning: its expected value
with respect to the %mass probability density
true probability distribution $q(x)$ as defined in
(\ref{qmass}) is the cross entropy loss, one the most used
loss functions in Deep Learning.

\begin{definition}
  Let $p(x,w)$ be a discrete empirical probability distribution,
  $q(x)$ the corresponding true distribution.
  We define \textit{loss function} the expected value of the
  information loss with respect to $q(x)$:
  $$
  L(x,w)= - \bE_q[\log p(x,w)] =-\sum_{i=1}^C q_i(x)
\log p_i(x,w)
  $$
\end{definition}

Given two probability distributions $p$ and $q$, the Kullback Leibler
divergence intuitively measures how much they differ and it is
defined as:
$$
\KL(q||p):=\sum_i q_i \log \frac{q_i}{p_i}
$$

As we have seen in Sec. \ref{loss-subsec} we have that:
$$
\KL(q(x)||p(x,w))=L(x,w)+H(q(x))=L(x,w) 
$$
since $H(q(x))=0$ for a mass probability density.

\medskip
Now we turn to the most important definition in Information
Geometry: the Fisher information matrix.

\begin{definition}
  Let $p(x,w)$ be a discrete empirical probability distribution,
  $q(x)$ the true distribution.
  We define $F$ the
  \textit{Fisher information matrix} or \textit{Fisher matrix}
  for short, as
  $$
  F_{ij}(x,w)=\bE_p[\partial_{w_i}\log(p(x,w)\partial_{w_j}\log(p(x,w)]
  $$
\end{definition}

One way to express coincisely the Fisher matrix is
the following:
$$
F(x,w)=\bE_p[\nabla \log(p(x,w) (\nabla \log(p(x,w))^t]
$$
where we think $\nabla \log(p(x,w) $ as a column vector and $T$
denotes the transpose.
Notice that $F(x,w)$ does not contain any information regarding
the true distribution $q(x)$.

\begin{remark}
  Some authors prefer the Fisher matrix to depend %as depending
  on the parameters only, hence they take a sum over the data:
$$
F(w)=\sum_x F(x,w)
$$
We shall not take this point of view here, adhering to
the more standard treatment as in \cite{amari}.
\end{remark}

\begin{observation}
  Notice that $F(x,w)$ is symmetric, since it is a finite sum
  of symmetric matrices. Notice also that it is positive semidefinite.
  In fact
  \begin{align}
\begin{split}
u^t F(x,w) u &=
\bE_{p}\left[u^t \nabla_w\log p(x,w) (\nabla_w \log p(x,w))^t u\right] =\\
&=\bE_{p}\left[\langle \nabla_{w}\log p(x, w), u \rangle^{2} \right] \geq 0.
\end{split}
  \end{align}
  where $\langle,\rangle$ denotes the scalar product in $\R^p$.
\end{observation}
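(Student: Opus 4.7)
The plan is to verify the two assertions separately using the outer-product representation $F(x,w) = \bE_p[\nabla_w \log p(x,w)(\nabla_w \log p(x,w))^t]$ recorded just above the observation. This writes $F$ as a finite discrete expectation, i.e.\ a convex combination indexed by $i \in \{0,\ldots,C-1\}$ with weights $p_i(x,w)$, of rank-one matrices of the form $v v^t$. Once the desired properties are checked for a single such rank-one summand, they pass to $F$ by closure of the symmetric/positive-semidefinite cones under non-negative linear combinations.

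For symmetry, I would observe that any outer product $v v^t$ with $v \in \R^p$ satisfies $(v v^t)_{ij} = v_i v_j = v_j v_i = (v v^t)_{ji}$, hence is symmetric. Since symmetry is preserved under sums and multiplication by the non-negative scalars $p_i(x,w)$, the expectation $F(x,w)$ is symmetric. No hypothesis beyond the existence of the gradient $\nabla_w \log p(x,w)$ at the point in question is required.

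For positive semidefiniteness, I would pick an arbitrary $u \in \R^p$ and compute $u^t F(x,w) u$. By linearity of the (finite) expectation, I can pull $u$ and $u^t$ inside $\bE_p$, reducing the integrand to $u^t v v^t u = \langle v, u\rangle^2$ with $v = \nabla_w \log p(x,w)$; this is a non-negative random quantity, so its expectation is non-negative. This is precisely the chain of equalities displayed in the observation itself.

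There is no real obstacle here: the expectation is a finite weighted sum with non-negative weights, so all interchanges are legitimate and both properties descend from the corresponding facts for rank-one outer products. The only conceptual point worth flagging is that the same argument immediately shows that the aggregated Fisher matrix $F(w) = \sum_x F(x,w)$ discussed in the preceding remark is also symmetric and positive semidefinite, since these properties are stable under further finite summation with non-negative coefficients.
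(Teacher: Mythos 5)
Your proof is correct and follows essentially the same route as the paper: symmetry comes from $F(x,w)$ being a finite non-negative combination of symmetric outer products $vv^t$, and positive semidefiniteness from pulling $u$ inside the (finite) expectation to get $u^t F(x,w)u=\bE_p\left[\langle \nabla_w\log p(x,w),u\rangle^2\right]\geq 0$, which is exactly the displayed chain of equalities in the observation. Your closing remark that the aggregated matrix $F(w)=\sum_x F(x,w)$ inherits both properties is a harmless and correct addition.
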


The next proposition shows that the rank of the Fisher matrix
is bound by the number of classes $C$. This has a great impact
on the Deep Learning applications: in fact, while the Fisher
matrix is quite a large matrix, $p\times p$, where $p$ is
typically in the order of millions, the number of classes is
usually very small. Hence the Fisher matrix has a very low
rank compared with its dimension; e.g. in MNIST, the rank
of the Fisher is no larger than 9, while its dimension is of
the order typically above $10^4$.

\begin{proposition} \label{rank}
  Let the notation be as above. Then:
  $$
  \mathrm{rk} F(x,w) < C
  $$
\end{proposition}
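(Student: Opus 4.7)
The plan is to exhibit the Fisher matrix as an outer-product sum of $C$ rank-one terms, and then use the probability normalization $\sum_i p_i(x,w) = 1$ to produce a nontrivial linear relation among these terms. This will force the rank to drop strictly below $C$.

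First, I would unwind the expectation. Since the underlying random variable is discrete with $C$ values, writing $v_i := \nabla_w \log p_i(x,w) \in \R^p$ gives
$$
F(x,w) \;=\; \sum_{i=0}^{C-1} p_i(x,w)\, v_i v_i^{t}.
$$
This already shows $\mathrm{rk}\,F(x,w) \le C$, being a sum of $C$ rank-$\le 1$ matrices. To factor the sum conveniently, I would introduce the $p \times C$ matrix $M$ whose $i$-th column is $\sqrt{p_i(x,w)}\, v_i$; then $F(x,w) = M M^{t}$, and in particular $\mathrm{rk}\,F(x,w) = \mathrm{rk}\,M$. The task now is to bound $\mathrm{rk}\,M$ by $C-1$.

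Next I would exploit the fact that $p(x,w)$ is a probability distribution, so $\sum_{i=0}^{C-1} p_i(x,w) = 1$ identically in $w$. Differentiating with respect to each $w_k$ yields $\sum_i \partial_{w_k} p_i(x,w) = 0$, and using $\partial_{w_k} p_i = p_i \,\partial_{w_k} \log p_i$ this becomes
$$
\sum_{i=0}^{C-1} p_i(x,w)\, v_i \;=\; 0 \qquad \text{in } \R^p.
$$
Rewriting this as $\sum_i \sqrt{p_i(x,w)} \cdot \bigl(\sqrt{p_i(x,w)}\, v_i\bigr) = 0$ says precisely that the vector of coefficients $\bigl(\sqrt{p_0(x,w)}, \dots, \sqrt{p_{C-1}(x,w)}\bigr) \in \R^C$ lies in the kernel of $M$. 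Since the $p_i(x,w)$ sum to $1$, at least one is strictly positive, so this coefficient vector is nonzero; hence $\ker M \neq 0$ and $\mathrm{rk}\,M \le C-1$.

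Combining the two observations, $\mathrm{rk}\,F(x,w) = \mathrm{rk}\,M \le C-1 < C$, as required. I expect no real obstacle here; the only subtlety is the usual implicit assumption that $p_i(x,w) > 0$ so that $\log p_i(x,w)$ and its gradient are well defined (otherwise the $i$-th term is absent from $F$, which only makes the rank bound easier). The essential mechanism — that a probability simplex is one-dimensional less than its ambient space, and this ``missing'' dimension is transferred to the Fisher matrix via the score identity $\bE_p[\nabla_w \log p] = 0$ — is standard in information geometry and is the key conceptual point to emphasize in the writeup.
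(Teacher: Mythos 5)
Your proof is correct and follows essentially the same route as the paper's: both arguments hinge on the identity $\sum_i p_i(x,w)\,\nabla_w \log p_i(x,w) = \nabla_w\bigl(\sum_i p_i(x,w)\bigr) = 0$, obtained by differentiating the normalization $\sum_i p_i = 1$, to push the rank strictly below $C$. The only difference is in packaging: you establish the bound via the factorization $F = MM^{t}$ with columns $\sqrt{p_i}\,\nabla_w \log p_i$ and exhibit the nonzero vector $(\sqrt{p_i})_i$ in $\ker M$, whereas the paper identifies $\ker F(x,w)$ with the orthogonal complement of $\Span\{\nabla_w \log p_i(x,w)\}$ and then invokes the same linear dependence.
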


\begin{proof}
We first show that $\ker F(x, w) \subseteq
(\Span_{i=1, \ldots, C}\{\nabla_w \log p_i( x,w)\})^\perp$:
\begin{align}
\begin{split}
u \in \ker F(x, w) &\Rightarrow u^{T}F(x, w)u = 0
\Rightarrow 
\bE_{p}\left[\langle \nabla_{w}\log p( x, w), u \rangle^{2} \right] = 0 \\
&\Rightarrow 
\langle \nabla_{w} \log p_i( x, w), u \rangle = 0 \quad \forall\ i=1,\ldots, C.
\end{split}
\end{align}
On the other hand, if $u \in (\Span_{i=1, \ldots, C}\{
\nabla_w \log p_i( x,w)\})^\perp$ then $u \in \ker F(x, w)$:
\begin{equation}
F(x, w)u = \bE_{p}\left[ \nabla_{w}\log p_i( x, w)
\langle \nabla_{w}\log p_i( x, w), u \rangle \right] = 0.
\end{equation}
%\textit{3.} From \ref{prop1}.\ref{thm:ker},
This shows that $\rank\ F(x,w) \leq C$.

The vectors $\nabla_{w}\log p_i( x, w)$ are linearly dependent since
\begin{align}
\begin{split}
\label{eq:expected_value}
& \bE_{p}[\nabla_w I(x,w)]=\sum_{i=1}^C p_i( x,w) \nabla_w \log p_i( x,w)= \\
&= \! \sum_{i=1}^C \nabla_w p_i( x,w)= 
\nabla_w \! \! \left(\sum_{i=1}^C p_i( x,w)\!\right) \! = \! \nabla_w 1 = 0.
\end{split}
\end{align}
Therefore, we deduce $\rank\ F(x,w) < C$.
\end{proof}

We now relate the Fisher matrix to the information loss.

\begin{proposition}
  The Fisher matrix 
  is the covariance matrix of the gradient of the information loss.
\end{proposition}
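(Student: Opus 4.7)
The plan is to unpack the definitions carefully and reduce the claim to the identity $\mathbb{E}_p[\nabla_w \log p(x,w)] = 0$, which was already derived in the proof of Proposition \ref{rank} (equation (\ref{eq:expected_value})).

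First I would clarify the setup: for fixed $x$ and $w$, we view $\nabla_w \log p_i(x,w) \in \R^p$ as a random vector on the index set $\{1,\dots,C\}$ distributed according to $p(x,w)$. Accordingly, $\nabla_w I(x,w) = -\nabla_w \log p(x,w)$ is also a random vector in $\R^p$, and its covariance matrix is the $p \times p$ matrix
\[
\mathrm{Cov}_p(\nabla_w I) = \mathbb{E}_p\!\left[(\nabla_w I - \mathbb{E}_p[\nabla_w I])(\nabla_w I - \mathbb{E}_p[\nabla_w I])^t\right].
\]

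Next, I would invoke the computation (\ref{eq:expected_value}) from the proof of Proposition \ref{rank}, which shows $\mathbb{E}_p[\nabla_w I(x,w)] = \sum_{i=1}^C p_i(x,w)\nabla_w \log p_i(x,w) = 0$. Because the mean vanishes, the covariance collapses to the raw second moment,
\[
\mathrm{Cov}_p(\nabla_w I) = \mathbb{E}_p[\nabla_w I \,(\nabla_w I)^t] = \mathbb{E}_p[\nabla_w \log p \,(\nabla_w \log p)^t],
\]
using that the sign cancels in the outer product. By the very definition of the Fisher matrix (entrywise: $F_{ij}(x,w) = \mathbb{E}_p[\partial_{w_i}\log p \,\partial_{w_j}\log p]$), the right-hand side equals $F(x,w)$, which completes the argument.

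There is no real obstacle: the only subtlety is bookkeeping, namely making sure the reader understands that the randomness is over the class index $i \sim p(x,w)$ and that the gradient $\nabla_w \log p_i(x,w)$ lives in the parameter space $\R^p$. I would therefore state this convention explicitly at the start of the proof to avoid confusion between the two roles of $p$ (the probability distribution and the dimension of the weight space).
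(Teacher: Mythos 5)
Your proposal is correct and follows essentially the same route as the paper: both establish $\mathbb{E}_p[\nabla_w I(x,w)]=0$ (the paper recomputes $\sum_i p_i\,\nabla_w p_i/p_i=\nabla_w\!\sum_i p_i=0$ directly, while you cite the identical identity already derived in the rank proposition) and then collapse the covariance to the raw second moment $\mathbb{E}_p[\nabla_w\log p\,(\nabla_w\log p)^t]=F(x,w)$. Your explicit remark about the randomness being over the class index is a nice clarification but not a different argument.
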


\begin{proof} The gradient of the information loss is %$I(x,w)=-\log p( x,w)$, 
$$
\nabla_w I(x,w)=- \frac{\nabla_w p( x,w)}{p( x,w)}
$$
Notice:
$$
\begin{array}{l}
\bE_{p}(\nabla_w I)=
\sum p_i\frac{\nabla_w p_i}{p_i}=
%\qquad = %\bE_{p}
\sum_i\nabla_w p_i=%\bE_{p}
\nabla_w(\sum_i p_i)=0
\end{array}
$$ 
The covariance
matrix of $\nabla_w I(x,w)$ is (by definition): 
$$
\begin{array}{l}
\mathrm{Cov}(I)=\bE_{p}[(\nabla_w I -\bE_{p}(\nabla_w I))^t
(\nabla_w I -\bE_{p}(\nabla_w I))]= \\ \\
\qquad=\bE_{p}[(\nabla_w I)^t(\nabla_w I)]=F(x,w)
\end{array}
$$
\end{proof}

We conclude our brief treatment of Information Geometry by some observations
regarding the metric on the parameter space.

\begin{observation}
%In \cite{amari} (see also refs. therein)
%the Fisher matrix $F(x,w)$ is interpreted as a natural metric
%on the parameter space. 
%provides a natural metric on the parameter space
%during dynamics of the stochastic gradient descent. This fact
%is 
We first observe that:
$$
\begin{array}{rl}
\KL(p( x,w+\delta w)||p( x,w)) &\cong 
 \frac{1}{2}(\delta w)^t F(x,w) (\delta w) 
+\mathcal{O}(||\delta w||^3)
\end{array}
$$
This is a simple exercise based on Taylor expansion of the $\log$ function.

\medskip
Let us interpret this result in the light of Deep Learning,
more specifically, during the dynamics of stochastic gradient descent.
%This shows that, when $p(x,w)$ 
The Kullback-Leibler divergence  $\KL(p( x,w+\delta w)||p( x,w))$ measures
how $p( x,w+\delta w)$ and $p( x,w)$ differ, for a small variation of the parameters $w$,
for example for a step in stochastic gradient descent.
If $F$ is interpreted as a metric as in \cite{amari}
\footnote{
%though we are aware that
The low rank of $F$ in Deep Learning, 
makes this interpretation problematic.}, 
then $(\delta w)^t F(x,w) (\delta w)$ expresses
the size of the step $\delta w$. 
%However the reader must be warned: because
We notice however that, because of Prop. \ref{rank}, 
this interpretation is not satisfactory. Even if we restrict
ourselves to the subspace of $\R^p$ where the Fisher matrix is non degenerate, we cannot construct a submanifold, due to the non integrability of the distribution of the gradients, that we observe experimentally (see \cite{gf}).
Moreover the dynamics does not even follow a sub-riemannian constraint either, due to the non constant rank of the Fisher.
This very challenging behaviour
will be the object of investigation of a forthcoming paper.
\end{observation}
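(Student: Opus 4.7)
The plan is to Taylor-expand the discrete Kullback-Leibler divergence
\begin{equation*}
\Phi(\delta w) := \KL(p(x,w+\delta w)\,||\,p(x,w)) = \sum_{i=1}^{C} p_i(x,w+\delta w)\log\frac{p_i(x,w+\delta w)}{p_i(x,w)}
\end{equation*}
as a smooth function of $\delta w$ about the origin and read off its quadratic form. Since $\Phi(0)=0$, the statement reduces to showing $\nabla_{\delta w}\Phi(0)=0$ and $\nabla_{\delta w}^{2}\Phi(0)=F(x,w)$; Taylor's theorem then supplies the $\mathcal{O}(\|\delta w\|^{3})$ remainder, assuming $p(x,\cdot)$ is $C^{3}$ on a neighbourhood of $w$.

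First I would expand $p_{i}(x,w+\delta w)$ and the log-ratio $\log p_{i}(x,w+\delta w)-\log p_{i}(x,w)$ to second order in $\delta w$, multiply the two expansions, and collect terms by homogeneity. The zeroth-order contribution is absent because the log-ratio starts at order one. The first-order coefficient equals
\begin{equation*}
\sum_{j}\delta w_{j}\sum_{i}p_{i}(x,w)\,\partial_{w_{j}}\log p_{i}(x,w)=\sum_{j}\delta w_{j}\,\partial_{w_{j}}\!\sum_{i}p_{i}(x,w)=0,
\end{equation*}
which is precisely the identity $\bE_{p}[\nabla_{w}\log p]=0$ already established in the preceding proposition on the Fisher matrix as a covariance.

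For the quadratic part, two pieces survive. The first pairs $p_{i}$ at order zero with the order-two Taylor coefficient of $\log p_{i}$; using
\begin{equation*}
\partial_{w_{j}w_{k}}^{2}\log p_{i}=\frac{\partial_{w_{j}w_{k}}^{2}p_{i}}{p_{i}}-(\partial_{w_{j}}\log p_{i})(\partial_{w_{k}}\log p_{i})
\end{equation*}
together with the twice-differentiated normalization $\sum_{i}\partial_{w_{j}w_{k}}^{2}p_{i}=0$, this piece contributes $-\tfrac{1}{2}F(x,w)_{jk}\,\delta w_{j}\delta w_{k}$. The second is the cross term from multiplying the order-one Taylor parts of both factors,
\begin{equation*}
\sum_{i}(\partial_{w_{j}}p_{i})(\partial_{w_{k}}\log p_{i})=\sum_{i}p_{i}(\partial_{w_{j}}\log p_{i})(\partial_{w_{k}}\log p_{i})=F(x,w)_{jk},
\end{equation*}
since $\partial_{w_{j}}p_{i}=p_{i}\partial_{w_{j}}\log p_{i}$; this contributes $+F(x,w)_{jk}\,\delta w_{j}\delta w_{k}$. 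Summing, $-\tfrac{1}{2}F+F=\tfrac{1}{2}F$, yields the claimed quadratic form.

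The only delicate point, and what I expect to be the main obstacle in writing out a clean proof, is organizational bookkeeping: separating the two second-order contributions with the correct sign before invoking $\sum_{i}p_{i}=1$ to annihilate the $\nabla^{2}p/p$ piece, and tracking the factor $\tfrac{1}{2}$ from Taylor's formula against the unit prefactor of the cross term. Conceptually the argument is entirely routine, which is why the author dismisses it as a simple exercise on the Taylor expansion of the logarithm.
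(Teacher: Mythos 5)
Your proposal is correct and follows exactly the route the paper intends but leaves as a ``simple exercise'': a second-order Taylor expansion of $\KL(p(x,w+\delta w)\,||\,p(x,w))$ in $\delta w$, with the linear term killed by $\bE_{p}[\nabla_{w}\log p]=0$ and the two quadratic contributions combining as $-\tfrac{1}{2}F+F=\tfrac{1}{2}F(x,w)$. The bookkeeping, including the use of $\sum_i \partial^2_{w_jw_k}p_i=0$ to remove the Hessian-of-$p$ piece, is accurate, so there is no gap.
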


In the next proposition, we establish a connection between Information
Geometry and Hessian Geometry, since the metric, given by $F$,
can be viewed in terms of the Hessian
of a potential function.

\begin{proposition}
Let the notation be as above. Then
$$
F(x,w)=\bE_{p}[\mathbb{H}(I(x,w))]
\quad \hbox{where} \quad I(x,w)=-\log p(x,w)
$$
\end{proposition}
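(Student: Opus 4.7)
The plan is to compute the Hessian of $I(x,w) = -\log p(x,w)$ componentwise and then take expectation against $p$, with the key identity $\sum_k p_k(x,w) = 1$ doing the work.

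First I would write, for each pair of indices $i,j$,
\begin{equation*}
\partial_{w_i}\partial_{w_j} \log p_k(x,w) = \partial_{w_i}\left(\frac{\partial_{w_j} p_k(x,w)}{p_k(x,w)}\right) = \frac{\partial_{w_i}\partial_{w_j} p_k(x,w)}{p_k(x,w)} - \frac{\partial_{w_i} p_k(x,w)\,\partial_{w_j} p_k(x,w)}{p_k(x,w)^2}.
\end{equation*}
Negating and using $\partial_{w_i} p_k / p_k = \partial_{w_i} \log p_k$, the entries of $\mathbb{H}(I(x,w))$ become
\begin{equation*}
\bigl(\mathbb{H}(I)\bigr)_{ij} = -\frac{\partial_{w_i}\partial_{w_j} p_k(x,w)}{p_k(x,w)} + \partial_{w_i}\log p_k(x,w)\,\partial_{w_j}\log p_k(x,w).
\end{equation*}

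Next I would take the expectation with respect to $p$, i.e.\ sum over $k$ weighted by $p_k(x,w)$. The second term, after summation, is exactly $F_{ij}(x,w)$ by the definition of the Fisher matrix. The first term becomes
\begin{equation*}
-\sum_{k=1}^C p_k(x,w)\,\frac{\partial_{w_i}\partial_{w_j} p_k(x,w)}{p_k(x,w)} = -\sum_{k=1}^C \partial_{w_i}\partial_{w_j} p_k(x,w) = -\partial_{w_i}\partial_{w_j}\!\left(\sum_{k=1}^C p_k(x,w)\right) = 0,
\end{equation*}
since $\sum_k p_k(x,w) \equiv 1$. Combining these yields $\mathbb{E}_p[\mathbb{H}(I(x,w))]_{ij} = F_{ij}(x,w)$, which is the claim.

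There is no real obstacle here: the proof is a direct calculation whose only subtlety is exchanging derivatives with the finite sum over classes, which is automatic in the discrete setting considered here. This mirrors exactly the trick already used in the proof of Proposition~\ref{rank} and in the computation of $\mathbb{E}_p(\nabla_w I) = 0$ in the previous proposition, so it fits naturally into the flow of the appendix.
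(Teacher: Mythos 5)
Your proof is correct and follows essentially the same route as the paper: expanding the Hessian of $-\log p_k$ into the second-derivative term plus the outer product of log-gradients, then observing that the expectation of the first term vanishes because $\sum_k p_k \equiv 1$, leaving exactly $F(x,w)$. The only difference is that you work componentwise while the paper writes the same computation in matrix notation, so there is nothing to add.
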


\begin{proof}
  In fact (write $p=p( x,w)$):
$$
\mathbb{H}[I]=-\mathrm{Jac}\left[\frac{\nabla_w p}{p}\right]=
-\left[\mathbb{H}(p) \cdot p + \nabla_w p \cdot \nabla_w p\right]\frac{1}{p^2}
$$
Take the expected value: 
$$
\bE_{p}[\mathbb{H}[I]]=-\sum_i
p_i \frac{\mathbb{H}(p_i)}{p_i}+\bE_{p}\left[\frac{\nabla_w p}{p} 
\cdot \frac{\nabla_w p}{p}\right]=F
$$
where $\sum_i\mathbb{H}(p_i)=\mathbb{H}(\sum_ip_i)=0$.
\end{proof}

\section{Regression tasks} \label{regr-app}
%and the Universal Approximation Theorem} \label{mlp-app}
In this appendix we provide some explanations and references on another very important learning task that can be handled using Deep Learning: regression. 
In the main text we focused on \emph{classification} tasks, such as image recognition. We may be interested in other practical problems as well, for example we may want to predict the price of an house or of a car given some of its features such as dimension, mileage, year of construction/production, etc. In general, given some numerical data we may want to predict a number, or a vector, rather than a (probability of belonging to) a class. These tasks are called \emph{regression tasks} to distinguish them from classification tasks. From an algorithmic point of view, regression tasks are handled and trained as classification tasks, the most important difference being that different loss functions in step 2 of Section \ref{scd-sec} are usually employed. In particular, steps 0, 1, 3, 4, 5 described in Section \ref{scd-sec} can be repeated almost verbatim with the only conceptual difference that we are not trying to predict a class but rather to infer a vector: therefore it is more appropriate to speak of a \emph{regression or prediction function} instead of a score function, etc.\\For example, for step 1 if our regression task consists in predicting $p$ dimensional vectors from $d$ dimensional features, we choose a suitable regression or prediction function $P(x,w):\R^d\times\R^p\rightarrow\R^c$ that we can think of a score function where the codomain has not to be thought as a "vector of probabilities" but rather as the actual prediction or regression of our algorithm.
For step 2, suppose that we have a regression function
$P(x,w):\R^d\times\R^p\rightarrow\R^c$ and that we denote, for any sample $x_i\in\R^d$, $i=1,...,N$, with $y_i\in\R^c$ the vector we would like to predict. The following are then some common loss functions we can use for training:
\begin{itemize}
\item \emph{Mean Squared Error} (MSE) or $L_2$-norm: $$L(w):=\frac{1}{N}\sum_{i=1}^N||P(x_i,w)-y_i||_2^2$$
\item \emph{Root Mean Squared Error} (RMSE): $$L(w):=\sqrt{\frac{1}{N}\sum_{i=1}^N||P(x_i,w)-y_i||_2^2}$$
\item \emph{Mean Absolute Error} (MAE) or $L_1$-norm: $$L(w):=\frac{1}{N}\sum_{i=1}^N||P(x_i,w)-y_i||_1$$
\end{itemize}
The MSE is one of the most important loss functions when it comes to regression tasks and in the past also classification tasks were sometimes treated as regression tasks and the MSE loss was used in these cases as well. 
\begin{remark}
Performing a regression task using the MSE loss function to train a single layer MLP model is equivalent to solve an ordinary linear regression (see \cite{Hastie2005}) problem using a gradient descent algorithm, as a single layer MLP is simply an affine map.
\end{remark}
A priori, loss functions whose first-order derivatives do not exist might be problematic for stochastic descent algorithm (that is why the MSE is sometimes preferred over the MAE, for example). To solve this issue in some practical cases when a non differentiable loss function like the MAE would be appropriate for the regression problem at hand but problematic from the training viewpoint, the solution is often to use more regular functions that are similar to the desired non-differentiable loss function whose behaviour is desired. For example the  "Huber" or the "log(cosh)" loss functions could used in place of the MAE, see \cite{SalehLogCosh}.

\section{Multi-layer perceptrons and Convolutional Neural Networks}\label{mlp-app} % and convolutional neural networks}
In this appendix we give a self-contained explaination of multi-layer perceptrons and convolutional neural networks, complementing the one we give in the main body of the paper. Given an affine map $F:\R^n\rightarrow\R^m$, $F(x)=Ax+b,\quad A\in\M_{m,n}(\R),b\in\R^m$, we will call $b$ the bias of $F$. By viewing a matrix $A\in\M_{m,n}(\R)$ as a vector of $\R^{mn}$ we can define an \emph{affine (or, abusing notation linear) layer} as a couple $(F,w)$ where $F$ is an affine map and $w$ is the vector of weights obtained concatenating $A$ and the bias vector. One dimensional convolutions are the discrete
  analogues of convolutions in mathematical analysis. Let us consider a vector $x \in \R^d$. As already did in the main text, we can define a simple one dimensional convolution as follows:
  $$
  \mathrm{conv1d}:\R^d \lra \R^{d-r},\quad
  (\mathrm{conv1d}(x))_i=\sum_{j=1}^{r} K_{j}x_{i+j}
  $$
  where $K$ is called a \textit{filter} and is a vector of weights
  in $\R^r$, $r<d$. This should serve as the basic example to keep in mind and the analogy with continuous convolutions is clear in this case.

To generalize to the case of 2-dimensional convolutions and to obtain more general 1-dimensional convolutions, it is convenient to define one dimensional convolutions to be the following slightly more general functions:
\begin{definition}
    Let be $K\in\R^r$, and consider index functions $a(i,j), \alpha(i,j):[l]\times[d]\rightarrow\N$, where we have denoted as $[n]$ the subset $\lbrace 1,\cdots n\rbrace\subset\N$ and $a(i,-):[d]\rightarrow\N$ is assumed to be injective for all $i\in [l]$. We define \emph{one dimensional convolution operators} as functions of the following form
    $$
  \mathrm{conv1d}:\R^d \lra \R^{l},\quad
  \mathrm{conv1d}(x)_i=\sum_{j=1}^{d} K_{\alpha(i,j)}x_{a(i,j)} +b_i
   $$
   where $\R^d\ni x=(x_i)_{i=1}^d$, we set $x_k=0$, $K_h=0$ if $k\notin [d]$, $h\notin [r]$ and $b_i\in \R$ is a \emph{bias} term. We will call the vector $w\in\R^{r+l}$ resulting from the concatenation of $K$ and all the biases the \emph{vector of weights} associated to $\mathrm{conv1d}$. A one dimensional \emph{convolutional layer} is a couple $(\mathrm{conv1d},w)$ where $\mathrm{conv1d}$ is a one dimensional convolution and $w$ is the vector of weights associated to it.
\end{definition}
\begin{remark}
Under the assumptions and the notations of the previous definition if we set $a(i,j):=i+j$, $\alpha(i,j)=j$, $l=d-r$ and $b_i=0$ for all $i=1,...,l$ we get the simple convolution we defined before. In addition, the index functions $a(i,-)$ and $\alpha(i,j)$ are usually chosen to be non-decreasing, thus resembling the discrete analogue of the convolution usually employed in analysis.
\end{remark}
Besides the simple convolution we defined at the beginning of this paragraph, there are some standard choices of the functions $a(i,j)$, $\alpha(i,j)$ that are controlled by parameters known as \emph{stride, padding}, etc. We will not introduce them here as they become useful only when constructing particular models: we refer to \cite{lecun2015} for a discussion.\\
Many convolutions used in practice, in addition to having increasing index functions, have $l,r\leq d$ and are said to have \emph{one output channel}, or are of the form $\mathrm{conv1d}^{\times e}:\R^d\rightarrow\R^{l\times e}$ where $\mathrm{conv1d}$ is a one output channel, one dimensional convolution. The number $r$ in these cases is called the \emph{kernel size}.
We can define also the so called "pooling functions".
\begin{definition}
Consider an index function $a(i,j):[l]\times[d]\rightarrow\N$, as in the previous definition. For any $i\leq l$ and any vector $x\in\R^d$, we denote as $x_{a(i,-)}\in\R^R$ the vector $(x_{a(i,j)})_{j=1}^d$. We say that a function $P:\R^d\rightarrow\R^l$ is a \emph{pooling layer} or operator if $P(x)_i=\varphi(x_{a(i,-)})$ for a fixed \emph{pooling function} $\varphi$. If $\varphi$ is the arithmetic mean or the maximum function we will call the layer $P$ \emph{mean pooling} or \emph{max pooling} layer respectively. A pooling layer does not have an associated set of weights.
\end{definition}
\begin{remark}
Pooling layers are not meant to be trained during the training of a model, therefore they do not have an associated vector of weights.
\end{remark} 

\medskip
Two dimensional convolutions are conceptually the analogue of one dimensional convolutions in the context of matrices. Indeed, some data like images can be more naturally thought as grids (matrices) of numbers rather than vectors. Even if two dimensional convolutions are a particular case of one dimensional convolutions, because of their importance and their widespread use it is important to discuss them on their own. 

\begin{definition}
Denote as $\M_{d,q}(\R)$ the space of $d\times q$ real valued matrices. Let be $K_{hk}\in\M_{d,q}(\R)$, and let be $a(i,j), \alpha(i,j):[r]\times[n]\rightarrow\N$, $b(i,j), \beta(i,j):[s]\times[m]\rightarrow\N$ index functions where $a(i,-), b(j,-)$ are assumed to be injective for all $i, j$. We define two dimensional convolution operators as functions of the following form
    $$
  \mathrm{conv2d}:\M_{n,m}(\R) \lra \M_{r,s}(\R),\quad
  (\mathrm{conv2d}(A))_{ij}=\sum_{h=1}^{n}\sum_{k=1}^{m} K_{\alpha(i,h)\beta(j,k)}A_{a(i,h)b(j,k)} +b_{ij}
   $$
   where $A_{ij} \in \M_{n,m}(\R)$, as in the case of one dimensional convolutions we set $A_{ij}=0$, $K_{hk}=0$ if either $i>n$, $j>m$, $h>d$ or $k>q$ and $b_{ij}\in R$ is a \emph{bias} term.
\end{definition}
\begin{remark}
Using the canonical isomorphism $\M_{n,m}(\R)\cong\R^{n\times m}$ (notice that it is $\mathbb{Z}$-linear) %and the canonical bijection $$[r]\times [d]\times [s] \times [q]\cong [r\times s]\times [d\times %q]\quad(a,b,c,e)\mapsto((a-1)r+b,(c-1)d+e)$$
we can see that there is a bijection between the set of two dimensional convolutions and the one of one dimensional convolutions. As a consequence, we define a 2-dimensional \emph{convolutional layer} as a couple $(C,w)$ where C is a 2-dimensional convolution and $w$ is its associated vector of weights, obtained as the vector of weights of the one dimensional convolution associated to $C$. Moreover, as in the case of one dimensional convolutions, the index functions $a(i,-), b(j,-),\alpha(i,-), \beta(j,-)$ are usually assumed to be non-decreasing.
\end{remark}
As in the case of one dimensional convolutions, the form of the index functions are usually standard and controlled by parameters widely used by the practitioners such as stride, padding, etc., and the numbers $d,q$ are called the \emph{kernel sizes}. In addition, in most cases $p=q$ (i.e. the filter or kernel matrix is a square matrix).\\

\begin{figure}[h!]
    \centering
    \includegraphics[width=0.75\textwidth]{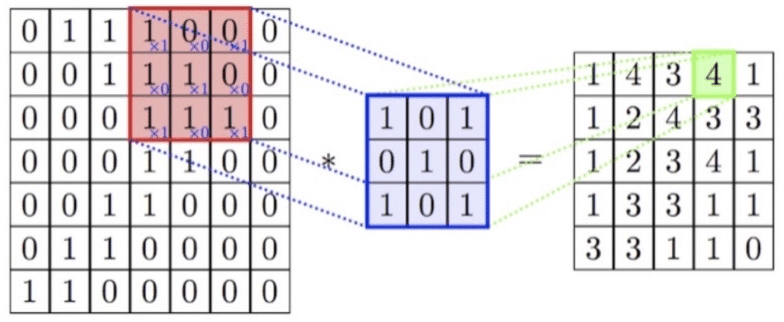}
      \caption{A 2d convolution as depicted in \cite{tobaccocnn}}
    \label{CNNconv}
\end{figure}

As we mentioned at the beginning of Section \ref{score-sec} we are not only interested on convolutions acting on $\M_{n,m}(\R)\cong\R^n\times\R^m$ but also on convolutions acting on $\R^p\times\R^n\times\R^m$ representing an image having $p$ channels or a $p\times n\times m$ voxel. We can then define 3-dimensional convolutions along the lines of what we did for 2-dimensional ones and, more generally we can define $n$-dimensional convolutions and convolutional layers. As these are straightforward generalizations and as all these cases reduce to the case of one dimensional convolutions, we will not spell out the definitions. We define $n$-dimensional \emph{pooling layers} analogously.\\
We shall now define two very important types of neural networks.
\begin{definition}
We say that $F$ ($(F,w)$) is a \emph{convolution} (convolution layer) if it is a convolution operator (layer). We say that $\sigma:\R\rightarrow\R$ is an \emph{activation function} if it is either a bounded, non-constant continuous function or if it is the RELU function. Given an activation function $\sigma$, for all $n\geq 1$ we can view it as a map $\R^n\rightarrow\R^n$ acting componentwise (and in this case we denote $\sigma^{\times n}$ as $\sigma$ abusing terminology).
\end{definition}

\begin{definition}
Consider a finite sequence of positive natural numbers $N_0, ...,N_L$\footnote{That, under the terminology of Section \ref{scd-sec}, are among the hyperparameters}, $L\in\N_{>0}$ and fix an activation function $\sigma$. We say that a function $F:\R^{N_0}\rightarrow\R^{N_L}$ is called
\begin{itemize}
    \item A $\sigma$-\emph{multilayer perceptron} (MLP) if it is of the form $$F=G\circ H_{L-1}\circ\cdots \circ H_{1}$$
    where $G:\R^{N_{L-1}}\rightarrow\R^{N_L}$ is an affine map and for all $i=1,...,L-1$, $H_i=\sigma\circ A_i$ where $A_i:\R^{N_{i-1}}\rightarrow\R^{N_i}$ is an affine map.
    \item A $\sigma$-\emph{convolutional neural network} (CNN) if it is of the form $$F=G\circ H_{L-1}\circ\cdots \circ H_{1}$$ where $G:\R^{N_{L-1}}\rightarrow\R^{N_L}$ is an affine map and, for all $i=1,...,L-1$, $H_i$ is either a pooling function or it is of the form $H_i=\sigma\circ A_i$ where $A_i:\R^{N_{i-1}}\rightarrow\R^{N_i}$ is affine or a convolution.
\end{itemize}
The number $L$ is usually called the \emph{number of layers} and $\sigma$ is assumed to be applied componentwise. The convolutions and the affine maps appearing in a MLP or in a CNN are considered as layers. We denote as $\cN^\sigma(\R^n,\R^m)$ the set of $\sigma$-MLPs having domain $\R^n$ and codomain $\R^m$. Finally, given a MLP or a CNN $F$, we define the vector of its \emph{weights}, $w_F$, to be the concatenation of all the vectors of weights of its layers. When we see a MLP or a CNN $F$ as a network, we usually think of it as a couple $(F(w),w)$, and in the terminology of Section \ref{dl-sec}, somewhat abusing the terminology, that $w$ is the model.
\end{definition}
There exists many more types of CNNs, therefore our definition here is not meant to be the most general possible. However, all of them use at least one convolution operator as the ones the we have defined before. MLPs are usually referred as "fully-connected" feedforward neural networks. Indeed, both MLPs and CNNs fall under the broader category of \emph{feeedforward neural networks} which are, roughly speaking, networks that have an underlying structure of directed graph. As there is not a general straightforward and concise definition of these networks, in this work we will only consider MLPs and CNNs having the structure above and refer the reader to the literature, see \cite{lecun2015} for more general definitions or Section2 in [Zhu22] for a recent and very clean definition from a graph theoretical viewpoint.\\

\begin{figure}[h!]
    \centering
    \includegraphics[width=0.75\textwidth]{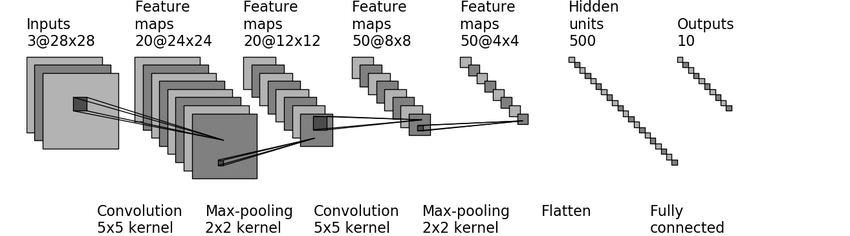}
      \caption{A very influential Convolutional Neural Network: the famous LeNet introduced in \cite{LeNet} (the image depicts LeNet-5 as in \cite{lenetimg}).}
    \label{CNNconv}
\end{figure}
Suppose we are given a network $(F(w),w)$ which can be either a MLP or a CNN. If $w\in\R^p$ and $F(w):\R^n\rightarrow\R^C$ %we notice that we have a function $\R^r\rightarrow\cN^\sigma(\R^p,\R^C)$, $w\mapsto F(w)$. 
we can define the score function associated to $(F(w),w)$ as $$s:\R^d\times\R^p\rightarrow\R^C,\quad s(x,w):=F(w)(x)$$This is the score we use for training as explained in the main text.

\section{Universal Approximation Theorem}\label{Univ:approx}
We conclude this appendix mentioning a so called \emph{universal approximation theorem} for MLPs, an important theoretical result showing that the architectures we introduced have a great explanatory power. For more details, see
\cite{Hornik1991} Theorems 1 and 2. 

\begin{theorem}%[Universal approximation, Theorems 1 and 2]
For any given activation function $\sigma$, the set of $\sigma$-MLPs $\cN^\sigma(\R^n,\R)$ is dense in the set of continuous real valued functions $C(\R^n)$ for the topology of uniform convergence on compact sets.
\end{theorem}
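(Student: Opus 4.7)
The plan is to follow the classical functional-analytic approach of Cybenko and Hornik, reducing the density statement to a discriminatory property of the activation function. First, observe that the $\sigma$-MLPs of depth $L=2$ with scalar output are exactly the finite sums
\[
x \;\longmapsto\; \sum_{i=1}^{N} c_i\, \sigma(w_i \cdot x + b_i) + c_0,
\qquad N \in \N,\ c_i, b_i \in \R,\ w_i \in \R^n,
\]
and denote this subset $\Sigma(\sigma) \subseteq \cN^\sigma(\R^n,\R)$. Since the topology of uniform convergence on compacta is determined by the sup norms on compact sets $K \subset \R^n$, and since $\Sigma(\sigma) \subseteq \cN^\sigma(\R^n,\R)$, it suffices to show that for every compact $K$ the restriction $\Sigma(\sigma)|_K$ is dense in $C(K)$ with the uniform norm.

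The first step is a standard Hahn--Banach/Riesz reduction: a linear subspace $V \subseteq C(K)$ is dense if and only if every continuous linear functional on $C(K)$ vanishing on $V$ is zero, and by the Riesz representation theorem such functionals correspond to signed regular Borel measures $\mu$ on $K$. Hence density of $\Sigma(\sigma)|_K$ is equivalent to the implication
\[
\int_K \sigma(w \cdot x + b)\, d\mu(x) = 0 \ \ \forall\, w \in \R^n,\, b \in \R
\ \Longrightarrow\ \mu = 0.
\]
An activation function with this property is called \emph{discriminatory}, and all of the analytic content of the theorem is concentrated here.

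The second step is to verify discriminatoriness in each of the two cases allowed by the paper's definition of activation function. When $\sigma$ is continuous, bounded, and non-constant, I would follow Hornik: by Stone--Weierstrass (after extending by Tietze to a cube containing $K$), the trigonometric ridge functions $\cos(w \cdot x)$ and $\sin(w \cdot x)$ are dense in $C(K)$, so it suffices to approximate $\cos$ and $\sin$ uniformly on compact subsets of $\R$ by linear combinations of dilations and translates of $\sigma$; this last one-dimensional step is handled by a convolution/mollification argument that crucially uses both boundedness and non-constancy of $\sigma$. For $\sigma = \RELU$, which is unbounded, the previous argument does not apply directly, but a self-contained route exists: every continuous piecewise-linear function on an interval $[a,b]$ is a finite $\R$-linear combination of translates $t \mapsto \RELU(t-c)$ together with an affine term, and continuous piecewise-linear functions are uniformly dense in $C([a,b])$; the one-dimensional conclusion then extends to $C(K)$ via the classical density of ridge functions $\{g(w \cdot x) : w \in \R^n,\ g \in C(\R)\}$ (a result of Vostrecov--Kreines).

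The main obstacle is unquestionably Step 2 in the bounded case: separating an arbitrary signed measure by translates and dilations of a generic bounded non-constant continuous $\sigma$ requires genuine Fourier-type analysis, and it is precisely here that the non-constancy hypothesis becomes essential, for otherwise $\Sigma(\sigma)$ would consist only of constants and discriminatoriness would fail trivially. Steps 1 and the passage from each compact $K$ back to the topology of uniform convergence on compacta of $\R^n$ are formal, and deeper MLPs in $\cN^\sigma(\R^n,\R)$ automatically inherit density from the depth-two case through the inclusion $\Sigma(\sigma) \subseteq \cN^\sigma(\R^n,\R)$.
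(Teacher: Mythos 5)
Your proposal is correct in outline, but it handles the two cases rather differently from the paper. For the bounded, non-constant, continuous case you both ultimately lean on the same classical analysis: the paper simply cites Theorem 2 of Hornik (1991), while you sketch the Cybenko--Hornik route (restrict to the depth-two subclass, Hahn--Banach/Riesz reduction to the discriminatory property, then Fourier/Stone--Weierstrass plus mollification); since the hard analytic step is in both cases deferred to the literature, there is no real gap here, just more explicit scaffolding on your side. The genuine divergence is the RELU case. The paper disposes of it with a one-line reduction: the spike function $r(x)=\RELU(x-a)-2\RELU(x)+\RELU(x+a)$ is bounded, continuous and non-constant, and every $r$-unit is a linear combination of three RELU units, so RELU networks simulate $r$-networks and density follows from the bounded case already cited. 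You instead give a self-contained constructive argument: translates of $\RELU$ plus an affine term span the continuous piecewise-linear functions on an interval, these are uniformly dense in $C([a,b])$, and density of finite sums of ridge functions $g(w\cdot x)$ in $C(K)$ (which, as you could note, already follows from your own Stone--Weierstrass step applied to $\cos(w\cdot x)$, $\sin(w\cdot x)$, so the appeal to Vostrecov--Kreines is not strictly needed) then yields density of depth-two RELU networks directly, with no Fourier-type analysis for that case. The paper's trick buys brevity and makes RELU a corollary of the general statement; your route buys an elementary, explicit approximation for the activation that matters most in practice, at the cost of invoking an extra (though easy) ridge-function density fact. Both correctly exploit that density of the depth-two subclass suffices, since $\cN^\sigma(\R^n,\R)$ contains it.
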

\begin{proof}
This is proved as Theorem 2 in \cite{Hornik1991}. We only remark that the theorem holds in the case $\sigma$ is the RELU, see the remark at %because of what is remarked in the penultimate paragraph on 
page 253 in \cite{Hornik1991}, as for a given $\R\ni a\neq 0$ the "spike function" $$r(x)=\mathrm{RELU}(x-a)-2\mathrm{RELU}(x)+\mathrm{RELU}(x+a)$$ is bounded, continuous and non-constant
\end{proof}
This theorem is conceptually very important as it states that an arbitrary continuous function can be in principle approximated by a suitable MLP. As a consequence, trying to approximate unknown functions with Neural Networks may seem less hopeless
than what one might think at a first glance. In real world applications, though it is somewhat unpractical to solve any task with an MLP (for many reasons such as lack of data), and for this reason other architectures such as CNNs, GNNs etc. have been designed and deployed.
\\
There exist more general approximation theorems concerning more general MLPs (e.g. for the class $\cN^\sigma(\R^n,\R^m)$), or for other neural networks architectures such as CNNs and GNNs. For these results and for a more comprehensive discussion of the topic the reader is referred to 
\cite{univ-thm}, \cite{lecun2015} and \cite{noneuclapprox}.

\bibliographystyle{plain}
\bibliography{dl-notes2}
 
\end{document}